%% file: main.tex
\documentclass{article}
\usepackage{adjustbox}
\usepackage{tcolorbox}
\usepackage{xcolor}
\usepackage{microtype}
\usepackage{graphicx}
\usepackage{subcaption}
\usepackage{booktabs} % for professional tables
\definecolor{lightblue}{rgb}{0.6, 0.8, 0.9}
\definecolor{darkblue}{rgb}{0.2,0.4,0.6}
\definecolor{darkgreen}{rgb}{0, 0.55, 0.12}
\definecolor{darkred}{rgb}{0.6,0,0}
\input{math_commands.tex}
\usepackage{algorithmic}
\usepackage{algorithm}
\usepackage{arydshln}

\usepackage{hyperref}
\let\OriginalAddContentsLine\addcontentsline

\usepackage[preprint]{icml2026}

\usepackage{amsmath}
\usepackage{amssymb}
\usepackage{mathtools}
\usepackage{amsthm}
\usepackage{enumitem}
\usepackage{bm}
\usepackage{url}

\usepackage{pifont}
\usepackage{booktabs}
\usepackage{multicol}
\usepackage{multirow}
\usepackage{listings}
\usepackage{balance}
\usepackage{needspace}
\usepackage{xspace}
\usepackage{colortbl}
\usepackage{setspace}
\usepackage{fancybox}
\usepackage{tocloft}
\usepackage{etoc}

\usepackage[capitalize, noabbrev]{cleveref}

\theoremstyle{plain}
\input{macro-math.tex}

\theoremstyle{definition}

\theoremstyle{remark}

\usepackage[textsize=tiny]{todonotes}

\icmltitlerunning{FreeAct: Freeing Activations for LLM Quantization}

\begin{document}
  \twocolumn[ \icmltitle{FreeAct: \underline{Free}ing \underline{Act}ivations for LLM Quantization}

  % It is OKAY to include author information, even for blind submissions: the
  % style file will automatically remove it for you unless you've provided
  % the [accepted] option to the icml2026 package.

  % List of affiliations: The first argument should be a (short) identifier you
  % will use later to specify author affiliations Academic affiliations
  % should list Department, University, City, Region, Country Industry
  % affiliations should list Company, City, Region, Country

  % You can specify symbols, otherwise they are numbered in order. Ideally, you
  % should not use this facility. Affiliations will be numbered in order of
  % appearance and this is the preferred way.
  \icmlsetsymbol{equal}{*}

  \begin{icmlauthorlist}
    \icmlauthor{Xiaohao Liu}{nus} 
    \icmlauthor{Xiaobo Xia}{nus} 
    \icmlauthor{Manyi Zhang}{huawei}
    \icmlauthor{Ji-Fu Li}{huawei}
    \icmlauthor{Xianzhi Yu}{huawei}\\
    \icmlauthor{Fei Shen}{nus}
    \icmlauthor{Xiu Su}{csu}
    \icmlauthor{See-Kiong Ng}{nus} 
    \icmlauthor{Tat-Seng Chua}{nus}
  \end{icmlauthorlist}

  \icmlaffiliation{nus}{National University of Singapore} \icmlaffiliation{huawei}{Huawei Technology} \icmlaffiliation{csu}{Central South University}

  \icmlcorrespondingauthor{Xiaobo Xia}{xbx@nus.edu.sg} 

  \icmlkeywords{Machine Learning, ICML}

  \vskip 0.3in ]

  \printAffiliationsAndNotice{} 

  \begin{abstract}
    Quantization is pivotal for mitigating the significant memory and computational overhead of Large Language Models (LLMs). While emerging transformation-based methods have successfully enhanced quantization by projecting feature spaces onto smoother manifolds using orthogonal matrices, they typically enforce a rigid \textit{one-to-one} transformation constraint. This static approach fails to account for the dynamic patterns inherent in input activations, particularly within diffusion LLMs~(dLLMs) and Multimodal LLMs~(MLLMs), where varying token types exhibit distinct distributions. To advance this, we propose FreeAct, a novel quantization framework that relaxes the static one-to-one constraint to accommodate dynamic activation disparities. Theoretically, we leverage the rank-deficient nature of activations to derive a solution space that extends beyond simple inverse matrices, enabling the decoupling of activation transformations from weights. Methodologically, FreeAct identifies token-specific dynamics (\textit{i.e.}, vision \textit{v.s.} text, or masked tokens) and allocates distinct transformation matrices to the activation side, while maintaining a unified, static transformation for the weights. Extensive experiments across dLLMs and MLLMs demonstrate that FreeAct significantly outperforms baselines, up to 5.3\% performance improvement, with in-depth analyses. Our code will be publicly released.
  \end{abstract}

  \input{sections/introduction}
  \input{sections/background}
  \input{sections/method}

\input{sections/experiments}
  \input{sections/discussions}

  \bibliography{reference}
  \bibliographystyle{icml2026}

  \renewcommand{\cftsecfont}{\normalsize}
  \renewcommand{\cftsubsecfont}{\normalsize}
  \renewcommand{\cftbeforesecskip}{12pt}
  \renewcommand{\cftbeforesubsecskip}{12pt}

  \addtocontents{toc}{\protect
  \setcounter{tocdepth}{2}}

  \newpage
  \appendix
  \onecolumn
  \begin{center}
    \LARGE \textbf{Appendix}
    \vspace{1em}
  \end{center}
  \tableofcontents
  \let
  \addcontentsline\OriginalAddContentsLine
  {\newpage}{\input{sections/appendix}}
\end{document}

%% file: math_commands.tex
\usepackage{amsmath,amsfonts,bm}

\def\eqref#1{equation~\ref{#1}}
\def\1{\bm{1}}

\def\vzero{{\bm{0}}}

\def\vc{{\bm{c}}}

\def\vm{{\bm{m}}}

\def\vr{{\bm{r}}}
\def\vs{{\bm{s}}}

\def\vx{{\bm{x}}}

\def\mA{{\bm{A}}}
\def\mB{{\bm{B}}}
\def\mC{{\bm{C}}}

\def\mI{{\bm{I}}}

\def\mP{{\bm{P}}}

\def\mU{{\bm{U}}}
\def\mV{{\bm{V}}}
\def\mW{{\bm{W}}}
\def\mX{{\bm{X}}}
\def\mY{{\bm{Y}}}
\def\mZ{{\bm{Z}}}

\DeclareMathAlphabet{\mathsfit}{\encodingdefault}{\sfdefault}{m}{sl}
\SetMathAlphabet{\mathsfit}{bold}{\encodingdefault}{\sfdefault}{bx}{n}

\def\gI{{\mathcal{I}}}

\def\gM{{\mathcal{M}}}

\def\gQ{{\mathcal{Q}}}

\def\gT{{\mathcal{T}}}

\def\sI{{\mathbb{I}}}

\def\sR{{\mathbb{R}}}

\newcommand{\E}{\mathbb{E}}
\newcommand{\Ls}{\mathcal{L}}

\newcommand{\diag}[1]{\text{diag}(#1)}

\definecolor{item}{RGB}{15, 86, 157}
\definecolor{query}{RGB}{169, 16, 3}

\definecolor{-}{rgb}{0.25,0.41,0.88}
\definecolor{+}{rgb}{0.70,0.13,0.13}

\definecolor{table_color}{RGB}{239,246,251}
\definecolor{good}{RGB}{58,113,104}
\definecolor{bad}{RGB}{180,0,0}
\definecolor{malboxborder}{RGB}{180,0,0}
\definecolor{benboxborder}{RGB}{81,91,131}
\definecolor{malboxbg}{RGB}{255,250,250}
\definecolor{benboxbg}{RGB}{251,252,254}
\definecolor{titlebg}{RGB}{77,77,77}
\newsavebox{\CaseStudy}
\newsavebox{\CaseStudyApdxOne}
\newsavebox{\CaseStudyApdxTwo}
\newsavebox{\CaseStudyApdxThree}
\newsavebox{\DSRPrompt}
\newsavebox{\CRPrompt}
\newsavebox{\GPTPrompt}

\newtcolorbox{gptpromptbox}[1][]{%
  colback=white,
  colframe=black,
  boxrule=1pt,
  arc=5pt,
  width=\textwidth,
  boxsep=2pt,
  left=2pt,
  right=2pt,
  top=2pt,
  bottom=2pt,
  title={PROMPT:}, % Title with colon
  fonttitle=\large\bfseries, % Larger and bold font
  coltitle=white, % Title text color
  colbacktitle=titlebg, % Gray background for the title
  #1
}

\newtcolorbox{promptbox}[1][]{%
  colback=white,
  colframe=black,
  boxrule=1pt,
  width=\textwidth,
  arc=5pt,
  boxsep=2pt,
  left=2pt,
  right=2pt,
  top=2pt,
  bottom=2pt,
  #1
}

\newtcolorbox{baselinebox}{
  colback=darkblue!2,
  colframe=darkblue!20,
  boxrule=0.6pt,
  arc=3pt,
  left=1pt, right=1pt, top=1pt, bottom=1pt,
}

\newtcolorbox{freequantbox}{
  colback=darkgreen!5,
  colframe=darkgreen!50,
  boxrule=0.6pt,
  arc=3pt,
  left=1pt, right=1pt, top=1pt, bottom=1pt,
}

%% file: macro-math.tex
\allowdisplaybreaks

\newtheorem{thm}{Theorem}
\newtheorem{lem}[]{Lemma}
\newtheorem{prop}[thm]{Proposition}

\theoremstyle{definition}

%% file: sections/introduction.tex
\section{Introduction}

\begin{figure}[t]
    \centering
    \includegraphics[width=\linewidth]{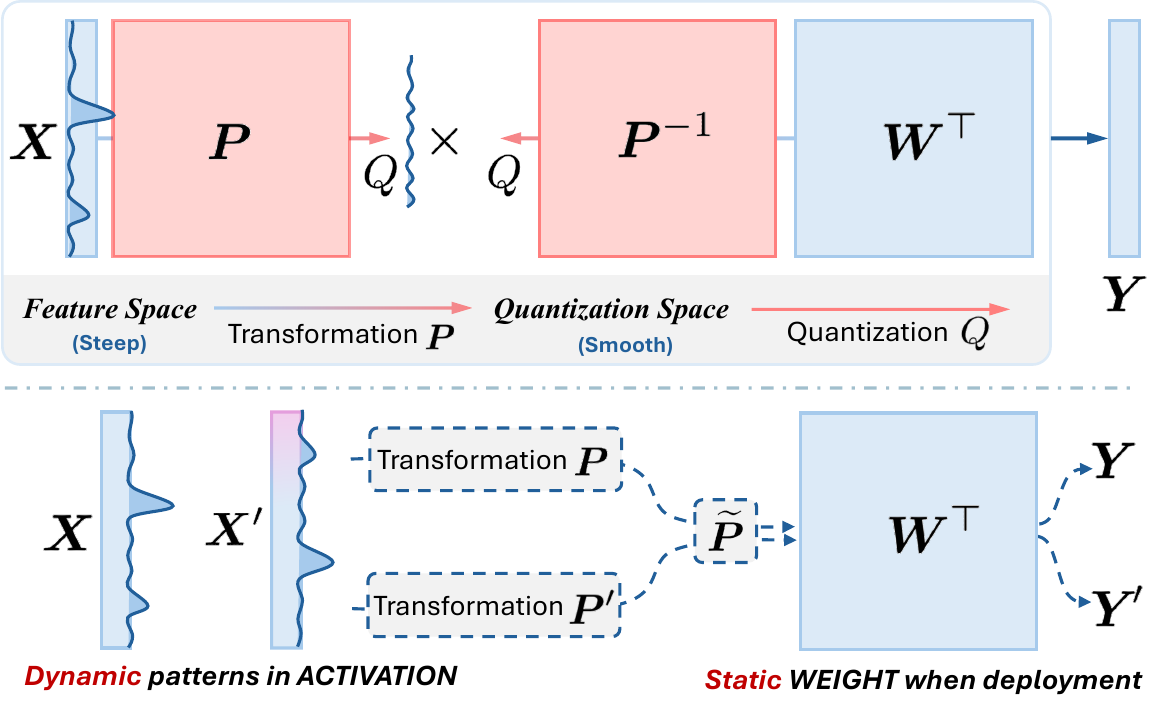}
    \caption{The steep activation is smoothed by the designed transformation matrix for quantization (\textit{top}). One weight corresponds to one transformation to ensure equivalence, \textit{i.e.}, $\mP\times\mP^{-1}= \mI$. In practice, activations are diverse, reflecting dynamic patterns, necessitating flexible transformation matrices on the activation side, while keeping static on the weight side for quantization (\textit{bottom}). }
    \vspace{-4mm}
    \label{fig:top}
\end{figure}

Large Language Models (LLMs) demonstrate remarkable performance across diverse tasks, enabled by pre-training on massive corpora with vast parameter counts~\cite{achiam2023gpt, touvron2023llama, grattafiori2024llama, yang2025qwen3, liu2024deepseek, guo2025deepseek}. However, this versatility and generalization come at the cost of significant memory and computation overhead~\cite{kaplan2020scaling,zhou2024survey,alizadeh2024llm,liu2025mtp, dai2025flashdecoding++next}. To maintain capabilities learned from the training stage, yet breaking the efficiency bottlenecks with lower overhead on deployment, recent research pivots toward one critical direction, LLM Quantization~\cite{egashira2024exploiting, tseng2024quip,xia2024efficient, dettmers2022gpt3}.

The fundamental goal of \textit{quantization} is to reduce the precision of model parameters and activations (\textit{e.g.}, from BF16 to INT4)~\cite{lin2025qserve, liu2025paretoq}. However, reducing bit-width introduces quantization error, particularly when dealing with significant outliers or non-uniform distribution of values in activations or weights. Conventional methods manage this error by 1) applying clipping thresholds to limit numerical ranges, or 2) employing fine-grained scaling, such as per-token~\cite{zhang2025quant, xu2025dllmquant} or per-channel scales~\cite{xiao2023smoothquant, lin2024awq}. Recently, transformation-based methods have emerged as a powerful technique~\cite{chee2023quip, tseng2024quip, ashkboos2024quarot,lin2024duquant, liu2024spinquant, ma2024affinequant}, by introducing an orthogonal matrix $\mP$ per linear layer to project the steep feature space onto a smoother quantization space, as illustrated at the top of Figure~\ref{fig:top}. The inverse $\mP^{-1}$ guarantees the equivalence of computation. One matrix and its unique inverse balance both activation and weight sides within LLMs, \textit{i.e.}, one-to-one transformation. This line of research, exemplified by QuaRot~\cite{ashkboos2024quarot} and FlatQuant~\cite{sun2024flatquant}, offers a new perspective to achieve higher quantization quality\footnote{We review the literature related to this work in Appendix~\ref{sec:related_work}.}.

Nevertheless, these methods overlook the dynamic patterns in activations, assuming a consistent behavior that can be handled by the one-to-one transformation. Despite the weights remaining static during inference, input activations can exhibit significant variability, particularly in diffusion LLMs (dLLMs)~\cite{nie2025large, ye2025dream,wang2025diffusion} or Multimodal LLMs (MLLMs)~\cite{bai2025qwen2,chen2024internvl,guo2025seed1}, as shown in Figure~\ref{fig:time_aware} and Figure~\ref{fig:modal_aware}. The disparities between activations motivate studies on time-aware (dLLMs)~\cite{zhang2025quant,xu2025dllmquant} and modality-aware quantization (MLLMs)~\cite{yu2025mquant}. Unfortunately, constrained by the requirement of maintaining a unique inverse matrix to ensure computational equivalence with static weights, they relegate the handling of dynamics solely to the activation-side scaling, neglecting the potential of dynamic transformations.

We pioneer relaxing the static one-to-one transformation constraint in LLM quantization. As illustrated at the bottom of Figure~\ref{fig:top}, distinct transformation matrices $\{\mP, \mP'\}$ are allocated to the activation side to accommodate distinct data types, \textit{e.g.}, vision and text tokens, that exhibit varying dynamic patterns. Meanwhile, the weight side remains static, utilizing a single common transformation $\widetilde\mP$. This design frees the activation transformation from the weights, offering new flexibility needed to handle such dynamics and thereby further bolstering quantization quality.

To operationalize this insight, we propose \textbf{FreeAct}, a novel post-training quantization method designed to address dynamic activation patterns by breaking the static one-to-one transformation constraint. We first characterize the dynamics arising from diverse token types. Specifically, in dLLMs, we investigate timestep-related variations in activation ranges driven by the significant disparities between masked and unmasked tokens. Similarly, in MLLMs, we observe analogous distinctions between vision and text tokens. FreeAct unifies these paradigms under a common principle. Theoretically, we leverage the rank-deficient nature of activations~\cite{zhang2024magr, jeon2402l4q} to derive a solution space for transformation relationships that extends beyond simple inverses, confirming the feasibility of FreeAct. Methodologically, we index tokens by type and allocate distinct transformation matrices accordingly. For activations, we construct matrices with both shared and unique components, using zero-padding to prevent information entanglement between unique subspaces. Conversely, the weight-side matrix unifies these components while maintaining orthogonality across different subspace bases. We optimize these quantization parameters by minimizing the error \textit{w.r.t.} specific activation types. Furthermore, we provide theoretical guarantees for the equivalence and highlight the implementation simplicity and compatibility with additional enhancements. Extensive experiments on both dLLMs and MLLMs demonstrate that FreeAct achieves superior quantization quality, delivering up to 5.3\% relative improvement over state-of-the-art (SOTA) baselines. Further analysis validates the rationale and efficacy of our method. Before delving into details, we summarize our contributions as follows:
\begin{itemize}[leftmargin=*]
    \item We pioneer relaxing the existing transformation constraint of the activation side to allow for more flexible, dynamic handling of varying activation patterns. Based on this, we unify two advanced LLM paradigms, \textit{i.e.}, dLLMs and MLLMs, under one common principle for quantization. 
    \item We propose FreeAct, a novel post-training quantization framework, utilizing the rank-deficiency of activations to derive distinct transformation matrices for different token types, while maintaining a unified, static transformation for weights. This method effectively handles dynamic patterns through a subspace-based construction and zero-padding strategy with theoretical support.
    \item We conduct comprehensive experiments across both dLLMs and MLLMs to demonstrate the superior quantization quality of FreeAct and provide in-depth analysis to discuss its rational and effective design.
\end{itemize}

%% file: sections/background.tex
\section{Preliminaries}

\subsection{Quantization}

\textbf{Setups.} Quantization is effective in accelerating the linear inference in networks, especially LLMs.
\begin{align}
    \gQ(\mX) = \left\lfloor \mX/s_{\mX}\right\rceil,\quad s_{\mX}= \max(|\mX|)/q_{\max}.
\end{align}
For a linear layer, the computation can be approximated by
\begin{align}
    \mX\mW & \approx \gQ(\mX)\times \gQ(\mW) = s_{\mX}s_{\mW}\cdot \gQ_\mX\times \gQ_\mW.
\end{align}
We are focusing on \textbf{4-bit} quantization for both weight and activation, \textit{i.e.}, W4A4. Under this setting, the model generally fails to predict reasonable tokens or to follow the instructions~\cite{liu2025quantization}. The quantization error~\cite{nagel2106white, li2024svdquant} can be defined as
\begin{align}
    \E(\mX,\mW) = \|\mX\mW - \gQ(\mX)\times \gQ(\mW)\|_{F}.
\end{align}
However, the direct quantization is normally ineffective, especially for cases with \textit{disparities between activations} in Large Language Models (LLMs). 

\textbf{Disparities between activations.} The activations exhibit different distributions with different ranges, suggesting that the activations need flexible quantization rather than a fixed process that corresponds to the weight side. In this paper, we explore two representative LLMs: \textit{Diffusion LLMs} (dLLMs)~\cite{nie2025large, ye2025dream} and \textit{Multimodal LLMs} (MLLMs)~\cite{bai2025qwen2,chen2024internvl}.
\begin{itemize}[leftmargin=*]
    \item \textit{\textbf{Mask-aware.}} dLLMs predict multiple tokens at newly added \texttt{[MASK]} tokens, which are progressively determined along with the denoising process. From Figure~\ref{fig:time_aware}, we identify that there are significantly different activation distributions between masked and unmasked tokens.
        \begin{figure}[H]
        \centering
        \vspace{-3mm}
        \includegraphics[width=\linewidth]{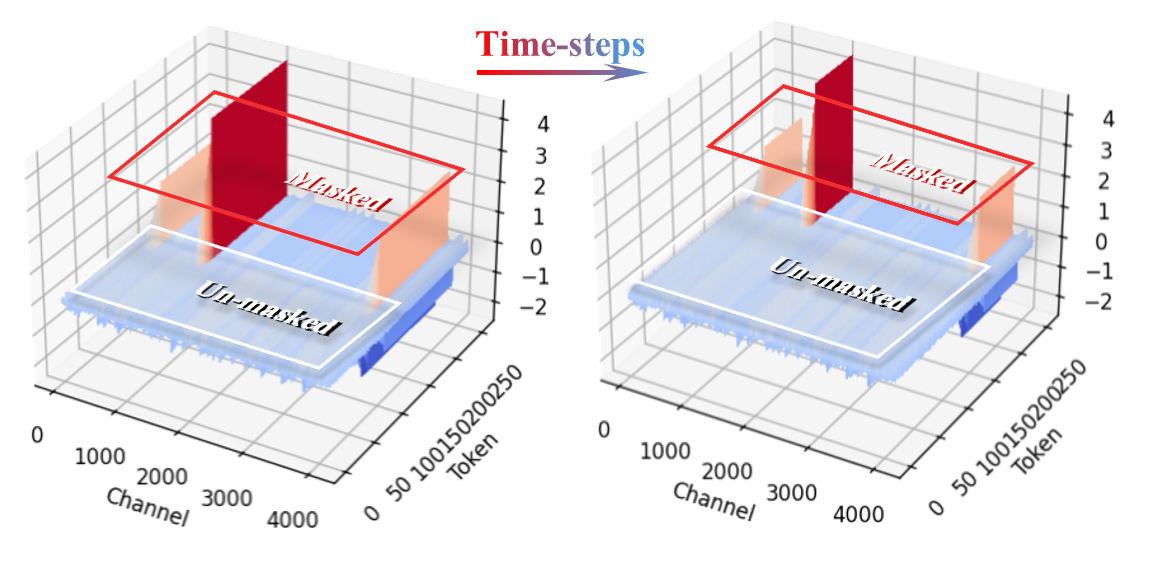}
        \caption{Illustrative example of activations between masked and unmasked tokens along the diffusion time-steps (dLLMs).}
        \label{fig:time_aware}
        \vspace{-5mm}
    \end{figure}
    \item \textit{\textbf{Modality-aware.}} MLLMs process different modalities, \textit{e.g.,} vision and text, as input. Different modality embeddings exhibit different activation distributions, as shown in Figure~\ref{fig:modal_aware}\footnote{Full visualization will be presented in Appendix~\ref{appendix:activation_vis} to save the main body space.}. This disparity, despite being diminished, will be passed to the following layers.
    \begin{figure}[H]
        \centering
        \vspace{-3mm}
        \includegraphics[width=\linewidth]{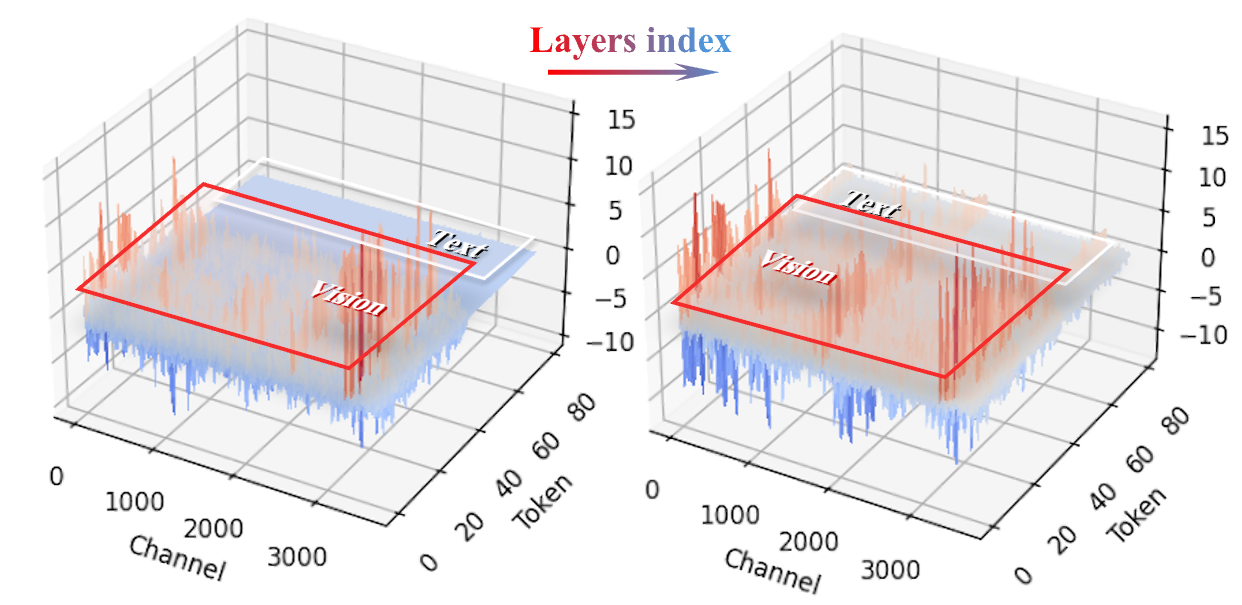}
        \caption{Illustrative example of activation distributions between vision and text tokens at different layers in MLLMs.}
        \label{fig:modal_aware}
    \end{figure}
\end{itemize}

From these preliminary investigations, we propose to process these two widely adopted paradigms (Multimodal and diffusion LLMs) under one quantization principle.

\subsection{Multimodal and Diffusion LLMs}

\textbf{Learning objective.} The learning of dLLMs aims to predict masked tokens with a unified cross-entropy loss \cite{yang2025mmada, nie2025large}: 
\begin{align*}
\Ls_{\gT}= -\E_{t,\vx_0, \vx_t}\Big[ \frac{1}{t}\sum_{l}^{L}\mathbb{I}(\vx_{t}^{l}= [\texttt{MASK}])\log p_{\theta}(\vx_{0}^{l}| \vx_{t}) \Big],
\end{align*} 
where $L$ denotes the sequence length of $\vx_{0}$ and $t$ is the time step sampled from $[0,1]$ uniformly, corresponding to $\vx_{t}$. $\mathbb{I}(\cdot)$ is the indicator.
MLLMs predict the next token, while accepting vision and text as input, formally, 
\begin{align}
\Ls_{\gM}= -\E_{\vx}\Big[ \sum_{l}^{L}\log p_{\theta}(\vx_{l+1}| \vx_{<l}) \Big].
\end{align} 
Here $\vx$ is modal-agnostic for simplicity.

\textbf{Inference.} For dLLMs, given one prompt $p_{0}$, the reverse process starts from a fully masked response $\vr_{L}$. At each intermediate step, tokens with high confidence are confirmed, with the others being remasked for the next step prediction. Specifically, the text sequences are divided into several blocks to mimic an auto-regressive process. For MLLMs, the inference is similar to conventional LLMs, following a next-token prediction manner. Here we handle these two types of models individually, leaving their hybrid version, \textit{i.e.}, Multimodal Diffusion Language Model~\cite{yang2025mmada}, or other models with heterogeneous token inputs, for future exploration.

%% file: sections/method.tex
\section{Methodology}
\begin{figure*}[t]
    \centering
    \includegraphics[width=0.9\linewidth]{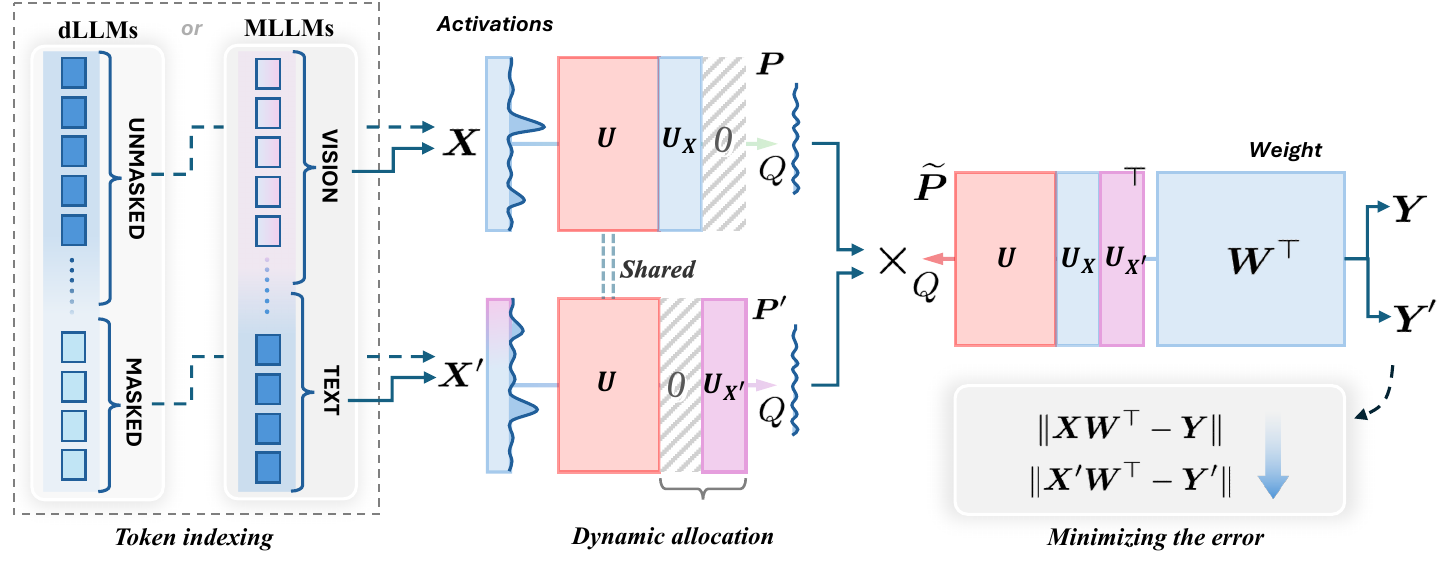}
    \vspace{-2mm}
    \caption{\textbf{The overall framework of FreeAct.} Different tokens are indexed to different transformation matrices according to their unique types (vision-text for MLLMs, and unmasked and masked for dLLMs). Different transformation matrices maintain a shared part together while possessing their own uniqueness, where zeros are filled in on another portion. The weight transformation matrix unites both portions to handle all the different activations. And the quantization error is minimized to optimize the quantization parameter. }
    \label{fig:main}
    \vspace{-2mm}
\end{figure*}

\subsection{Beyond One-to-One}

\textbf{One-to-one transformation.} Transformation matrices are introduced to model the affine connections among channels. Previous work, like per-channel scaling, can be viewed as a special case, where the matrix is diagonal, \textit{i.e.}, $\mY = \mX\mW = (\mX\text{diag}(\vc))\times(\text{diag}(\vc)^{-1}\mW^{\top})$. One matrix for one side, while its inverse is for the other side, to ensure the mathematical equivalence. A more general form can be represented as:
\begin{align}
    \mY = \mX\mP\times \mP^{-1}\mW^{\top}\approx \gQ(\mX\mP)\times \gQ(\mP^{-1}\mW^{\top}),
\end{align}
where $\mP$ is the transformation matrix. FlatQuant \cite{sun2024flatquant} extends the Hadamard transformation, \textit{e.g.}, QuaRot \cite{ashkboos2024quarot}, onto the real field $\mathbb{R}$ (\textit{i.e.}, affine transformation) to flatten the steep and dispersed distribution of activations and weights, thus achieving promising results with acceptable computational overhead. This paradigm establishes \textit{one-to-one correspondence} between activations and weights per layer. $\mP$ is fixed and shared for the dual side to ensure equivalence, \textit{i.e.}, $\mP\mP^{-1}= \mathbf{I}$.

\begin{tcolorbox}
    [colframe=gray!80, colback=gray!10, boxrule=0.2pt, arc=5pt, boxsep=2.5pt, left=2pt, right=2pt, top=2pt, bottom=2pt] \textbf{Dilemmas.} Quantized weights are static during deployment, yet encounter different activation distributions. For instance, the same weights will process activations at different steps and different modal activations in dLLMs and MLLMs, respectively. A \textit{one-fits-all} manner seems limited to handle diverse distributions.
\end{tcolorbox}

To advance the quantization, we explore new possibilities to free the activation side, especially for models that encounter diverse patterns of activations.

\textbf{Freeing the activation side.} For the static weights, we apply dynamic transformations to match the activations. Formally,
\begin{align}
    \left\{ \begin{aligned}\mX\mW^{\top}&= \mX\mP\times\widetilde{\mP}\mW^{\top},\\ \mX'\mW^{\top}&= \mX'\mP'\times\widetilde{\mP}\mW^{\top},\end{aligned} \right. \quad \text{s.t.}\ \mP\not= \mP' \label{eq:problem}
\end{align}
where $\mP$ and $\mP'$ correspond to different activations $\mX$ and $\mX'$. $\widetilde{\mP}$ is determined for weight $\mW$. We are resolving these transformations $\{\mP, \mP',\widetilde{\mP}\}$ to satisfy the above equations.

We transform this question to resolve an equation with a bilinear constraint, \textit{i.e.}, $\mX\mP\widetilde{\mP}\mW^{\top}= \mX\mW^{\top}\to\mX(\mP\widetilde{\mP}- \mI)\mW^{\top}= 0$. The trivial solution is to set the middle term $(\mP\widetilde{\mP}- \mI)$ as 0, leading to $\mP = \widetilde{\mP}^{-1}$. Due to the uniqueness of the matrix inverse, for the common transformation of the weight side, there is no choice for the transformation of different activations.

\begin{prop}
    [Beyond the Inverse] If satisfy the equation of $\mX\mP\widetilde{\mP}\mW^{\top}= \mX\mW^{\top}$, the product $\mP\widetilde{\mP}$ belongs to the set
    \begin{align}
        \Big\{ \mZ - \mP_{\mX}(\mZ- \mI)\mP_{\mW}\;\Big|\; \mZ \in \mathbb{R}^{d \times d}\Big\} \supseteq \{\mI\},
    \end{align}
    which strictly contains the singleton $\{\mathbf{I}\}$. Consequently, the constraint admits a solution space that is strictly larger than the set of exact inverses (for which $\mP\widetilde{\mP}= \mI$). \label{prop:beyond_inverse}
\end{prop}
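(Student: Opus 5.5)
The plan is to read $\mP_{\mX}$ and $\mP_{\mW}$ as the orthogonal projectors attached to the two sides of the bilinear constraint. Concretely, $\mP_{\mX} = \mX^{+}\mX \in \mathbb{R}^{d\times d}$ projects onto the row space of $\mX$, and $\mP_{\mW} = \mW^{\top}(\mW^{\top})^{+}$ projects onto the column space of $\mW^{\top}$, with $(\cdot)^{+}$ the Moore--Penrose pseudoinverse. I would then reduce the statement to characterizing the solution set of the homogeneous linear equation $\mX\mA\mW^{\top} = 0$ in the unknown $\mA = \mP\widetilde{\mP} - \mI$.

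The first step is to show $\mX\mA\mW^{\top} = 0 \iff \mP_{\mX}\mA\mP_{\mW} = 0$. For the forward direction, multiply on the left by $\mX^{+}$ and on the right by $(\mW^{\top})^{+}$. For the reverse direction, use $\mX\mP_{\mX} = \mX$ and $\mP_{\mW}\mW^{\top} = \mW^{\top}$.

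The second step is to parametrize the kernel of the linear map $\mA \mapsto \mP_{\mX}\mA\mP_{\mW}$ as $\{\mB - \mP_{\mX}\mB\mP_{\mW} : \mB \in \mathbb{R}^{d\times d}\}$.
\begin{itemize}
\item Every such matrix lies in the kernel by idempotence: $\mP_{\mX}(\mB - \mP_{\mX}\mB\mP_{\mW})\mP_{\mW} = 0$.
\item Conversely, if $\mA$ lies in the kernel, take $\mB = \mA$; then $\mA = \mA - \mP_{\mX}\mA\mP_{\mW}$.
\end{itemize}
Substituting $\mB = \mZ - \mI$ gives $\mP\widetilde{\mP} = \mI + (\mZ - \mI) - \mP_{\mX}(\mZ-\mI)\mP_{\mW} = \mZ - \mP_{\mX}(\mZ - \mI)\mP_{\mW}$, which is exactly the displayed set. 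The choice $\mZ = \mI$ recovers $\mI$, so the set contains $\{\mI\}$.

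The main obstacle is the strict containment. It is not true in general: if $\mX$ and $\mW$ both have full rank $d$, both projectors equal $\mI$ and the set collapses to $\{\mI\}$. I would therefore invoke the rank deficiency of activations assumed in the text, namely $\operatorname{rank}(\mX) < d$, so that $\mP_{\mX} \neq \mI$. (Alternatively one could use $\operatorname{rank}(\mW) < d$ and argue symmetrically.) Pick a nonzero vector $\vs \in \ker(\mP_{\mX})$ and set $\mZ = \mI + \vs\vc^{\top}$ for any nonzero $\vc$. Then $\mP_{\mX}(\mZ - \mI) = \mP_{\mX}\vs\vc^{\top} = 0$, so the product equals $\mI + \vs\vc^{\top} \neq \mI$, which proves the set is strictly larger.

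Finally, I would note that this freedom is what permits $\mP \neq \mP'$ in \eqref{eq:problem}. The two activations satisfy separate constraints through $\mP_{\mX}$ and $\mP_{\mX'}$ while sharing one $\widetilde{\mP}$. When $\widetilde{\mP}$ is invertible, the activation transforms are recovered as $\mP = (\mZ - \mP_{\mX}(\mZ-\mI)\mP_{\mW})\widetilde{\mP}^{-1}$.
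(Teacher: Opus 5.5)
Your proposal is correct and follows the paper's route. The paper rewrites the constraint as $\mX(\mP\widetilde{\mP}-\mI)\mW^{\top}=0$, applies Penrose's homogeneous general solution $\mZ-\mX^{\dagger}\mX\mZ\mW^{\top}(\mW^{\top})^{\dagger}$ (whose proof in the lemma is exactly your ``take $\mB=\mA$'' step), and reparametrizes $\mZ\mapsto\mZ-\mI$, as you do. Your one genuine addition is the strict-containment argument, which the paper's proof never supplies: its remark only appeals to rank deficiency and concedes that the full-rank case gives $\{\mI\}$. Your explicit choice $\mZ=\mI+\vs\vc^{\top}$ with $\vs\in\ker(\mP_{\mX})$ closes that gap, and you are right that strictness needs $\operatorname{rank}(\mX)<d$ or $\operatorname{rank}(\mW)<d$.
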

\vspace{-2mm}
\begin{tcolorbox}
    [colframe=gray!80, colback=gray!10, boxrule=0.2pt, arc=5pt, boxsep=2.5pt, left=2pt, right=2pt, top=2pt, bottom=2pt] \textbf{Remark.} $\mP_{\mX}\in\mathbb{R}^{d \times d}$ and $\mP_{\mW}\in\mathbb{R}^{d \times d}$ serve as the orthogonal projections onto the row spaces of activations and weights, satisfying $\mX\mP_{\mX}= \mX$ and $\mP_{\mW}\mW^{\top}= \mW^{\top}$. A special solution is that $\mP = \bar{\mP}$ and $\widetilde{\mP}= \bar{\mP}^{-1}$, when $\mZ =\mI$. In this case, we assume that the row spaces are of full rank, indicating $\mP_{\mX}= \mP_{\mW}= \mI$. Therefore, we only consider the singleton $\{\mI\}$. In general, activations are \textit{rank-deficient}~\cite{zhang2024magr}, which inspires more solutions to design specific transformations beyond the trivial inverse. Proof is detailed in Appendix~\ref{appendix:beyond_inverse}.
\end{tcolorbox}
\vspace{2mm}

\subsection{FreeAct}
\label{sec:freeact}
To endow activation transformation with more flexibility, we design a new $\mP$ learning method according to the above proposition by enforcing the low rank at the activation side while ensuring the equivalence. The overall framework is shown in Figure~\ref{fig:main}. Different tokens are indexed according to different activation transformation matrices, which are implemented via allocation. Weights are also transformed and multiplied by the activations in quantization, followed by the error minimization to optimize quantization parameters.

\textbf{Token indexing.} Given the token sequence $\vx$, we first split all tokens into different sets. For the sake of generality, we process two types of tokens once. Therefore, one set of indices can be represented as $\gI = \{i\ |\ \sI(\vx_{i}= [\texttt{MASK}]),\ i\le L\}$ per sequence, while another is $\gI' = \{i\ |\ \sI(\vx_{i}\not= [\texttt{MASK}]),\ i\le L\}$. For MLLMs, the indicators are designed to distinguish different modal tokens, \textit{i.e.}, belonging to vision or text. For each linear layer for quantization, assuming the complete input activations $\hat\mX$, we have $\mX = [\hat\mX_{i}]_{i\in\gI}$ and $\mX' = [\hat\mX_{i}]_{i\in\gI'}$. These indexed activations will be transformed and then concatenated following the original order for the fidelity of the sequence. Notably, with this formulation, we do not explicitly differentiate the types of LLMs (\textit{i.e.}, MLLMs or dLLMs) in the following.

\textbf{Dynamic allocation.} For different activations $\{\mX, \mX'\}$, we leverage different transformation matrices, \textit{i.e.}, $\mP$ and $\mP'$, to smooth numerical values \textit{w.r.t.} different channels. Specifically, they are constructed with shared and unique components. 
\begin{align}
&(\mX, \mX') \left\{ \begin{aligned}\mP&= \big[ \mU, \mU_{\mX}, \ \vzero\ \big], \quad \mU\in\sR^{d\times r}, \mU_{\mX}\in\sR^{d\times r_1}, \\ \mP'&= \big[ \mU, \ \vzero\ , \mU_{\mX'}\big],\quad \mU_{\mX'}\in\sR^{d\times r_2}\end{aligned} \right. \nonumber \\&\quad (\mW) \quad \widetilde{\mP}= \big[ \mU, \mU_{\mX}, \mU_{\mX'}\big]^{\top},\quad r+r_{1}+ r_{2}= d
\end{align} 
On one hand, $\mU$ is designed to preserve the shared row space of the activations, maintained by \textit{all} activations. On the other hand, we manually allocate different activations with different portions, where $\mU_{\mX}$ and $\mU_{\mX'}$ are maintained by \textit{distinct} activations. The remaining parts are filled with zeros to avoid any additional information entanglement. This design leads to a well-structured but static weight transformation via $\widetilde{\mP}$. Once $\mX$ is fed into the pre-quantization process, it invokes a low-rank transformation constructed by $\mU$ and $\mU_{\mX}$. Notably, $\mU$, $\mU_{\mX}$, and $\mU_{\mX'}$ represent different subspaces basises to avoid any overlap, and the equivalence
\begin{align}
\mX\mP \times \widetilde{\mP}\mW^{\top}= \mX\mW^{\top},\ \mX'\mP' \times \widetilde{\mP}\mW^{\top}= \mX'\mW^{\top}\label{eq:equality}
\end{align} 
can be guaranteed with analyses in Section~\ref{sec:guarantees_implementation}. Moreover, we provide in-depth analyses on guarantees and implementations from different perspectives for the crux of FreeAct.

\textbf{Minimizing the error.} We optimize these transformations by minimizing the quantization errors under the structure designed above. Following~\cite{sun2024flatquant}, we collect calibration datasets and calculate the differences between the ground truth output and predicted output after quantization. The learning objective is designed per layer:
\begin{align}
    \Ls_{q}= \E_{\vx_0}\Big[ & \| \mX\mW- \gQ(\mX\mP) \gQ(\widetilde{\mP}\mW^{\top})\|_{2}^{2}\nonumber                  \\
    +                         & \| \mX'\mW - \gQ(\mX'\mP') \gQ(\widetilde{\mP}\mW^{\top})\|_{2}^{2}\Big]. \label{eq:loss}
\end{align}
We randomly generate orthogonal matrices for initialization, and optimize them \textit{w.r.t.} different activation types. For dLLMs, we distinguish between unmasked and masked tokens, while for MLLMs, we process vision and text tokens individually to avoid the imbalance training.

\subsection{Guarantees and Implementations}
\label{sec:guarantees_implementation}

To further elucidate the theoretical underpinnings and practical deployment of FreeAct, we provide key analyses and implementation details that ground our design choices. Below, we detail the guarantees of equivalence offered by our method, implementation considerations, and additional enhancements that facilitate superior quantization outcomes.

\begin{thm}
    [Equivalence under projection invariance] Given $\mU, \mU_{\mX}, \mU_{\mX'}$ be semi-orthogonal basis matrices forming an orthogonal decomposition of $\mathbb{R}^{d}$, such that $\mX\mU_{\mX'}= \vzero$ and $\mX'\mU_{\mX}= \vzero$. If we define the orthogonal matrix $\widetilde{\mP}= [\mU, \mU_{\mX}, \mU_{\mX'}]^{\top}$ and the masked transformations $\mP$ and $\mP'$, then for any weight matrix $\mW$, the equalities in Eq.~(\ref{eq:equality}) hold. \label{thm:equivalence}
\end{thm}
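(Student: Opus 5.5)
The plan is a direct computation of $\mP\widetilde{\mP}$ and $\mP'\widetilde{\mP}$, followed by using the annihilation hypotheses to discard the leftover term. Write $\mU\in\sR^{d\times r}$, $\mU_{\mX}\in\sR^{d\times r_1}$, $\mU_{\mX'}\in\sR^{d\times r_2}$ with $r+r_1+r_2=d$. Semi-orthogonality together with the orthogonal decomposition of $\sR^d$ means the square matrix $\mB := [\mU,\mU_{\mX},\mU_{\mX'}]$ has orthonormal columns. Hence $\mB$ is orthogonal, $\widetilde{\mP}=\mB^{\top}$, and
\begin{align}
\mB\mB^{\top} = \mU\mU^{\top} + \mU_{\mX}\mU_{\mX}^{\top} + \mU_{\mX'}\mU_{\mX'}^{\top} = \mI. \label{eq:prop_resolution}
\end{align}
This resolution of the identity is the only structural fact I will need. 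The orthogonality of $\widetilde{\mP}$ claimed in the statement follows from it immediately.

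Next I multiply the block forms. Since $\mP=[\mU,\mU_{\mX},\vzero]$ and $\widetilde{\mP}$ has block rows $\mU^{\top},\mU_{\mX}^{\top},\mU_{\mX'}^{\top}$, block multiplication gives
\[
\mP\widetilde{\mP}=\mU\mU^{\top}+\mU_{\mX}\mU_{\mX}^{\top}.
\]
By \eqref{eq:prop_resolution} this equals $\mI-\mU_{\mX'}\mU_{\mX'}^{\top}$. Left-multiplying by $\mX$ and using the hypothesis $\mX\mU_{\mX'}=\vzero$ gives
\[
\mX\mP\widetilde{\mP}=\mX-(\mX\mU_{\mX'})\mU_{\mX'}^{\top}=\mX.
\]
Right-multiplying by $\mW^{\top}$ yields the first equality in Eq.~(\ref{eq:equality}), for arbitrary $\mW$.

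The second equality follows by the symmetric argument: $\mP'\widetilde{\mP}=\mI-\mU_{\mX}\mU_{\mX}^{\top}$, and the hypothesis $\mX'\mU_{\mX}=\vzero$ removes the correction term. It is worth noting how this fits Proposition~\ref{prop:beyond_inverse}. The product $\mP\widetilde{\mP}$ is the orthogonal projector onto $\mathrm{span}(\mU,\mU_{\mX})$, which contains the row space of $\mX$. It is therefore a non-identity member of the solution set described there, taking $\mP_{\mW}=\mI$.

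I do not expect a real obstacle, since the argument is block algebra. The one delicate point is interpreting the hypotheses precisely. I must justify that "semi-orthogonal bases forming an orthogonal decomposition" gives mutual orthogonality of the blocks and full dimension $r+r_1+r_2=d$, since together these give \eqref{eq:prop_resolution}. I must also read $\mP$ and $\mP'$ as the zero-padded $d\times d$ matrices from Section~\ref{sec:freeact}, so that the block sizes align. If the decomposition were not complete, the argument would break, and I would instead need the row spaces of $\mX,\mX'$ to lie in the respective spans.
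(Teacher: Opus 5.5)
Your proposal is correct and follows essentially the same route as the paper: expand $\mP\widetilde{\mP}=\mU\mU^{\top}+\mU_{\mX}\mU_{\mX}^{\top}$, rewrite it as $\mI-\mU_{\mX'}\mU_{\mX'}^{\top}$ using the resolution of the identity, and remove the correction term with $\mX\mU_{\mX'}=\vzero$ (and symmetrically for $\mX'$). Your version is slightly cleaner: it uses the annihilation hypothesis directly instead of re-deriving it from rank deficiency, and it writes the identity correctly as $\widetilde{\mP}^{\top}\widetilde{\mP}=\mI$, where the paper labels it $\widetilde{\mP}\widetilde{\mP}^{\top}$ (harmless, since $\widetilde{\mP}$ is square orthogonal).
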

\vspace{-2mm}
\begin{tcolorbox}
    [colframe=gray!80, colback=gray!10, boxrule=0.2pt, arc=5pt, boxsep=2.5pt, left=2pt, right=2pt, top=2pt, bottom=2pt] \textbf{Remark.} This theorem substantiates the theoretical validity of our design based on orthogonal decomposition. It guarantees that the linear operations remain equivalent when distinct activation types are projected onto disjoint basis sets. Proof is detailed in Appendix~\ref{appendix:equivalence}. In practice, we approach the zero-projection condition (\textit{e.g.}, $\mX\mU_{\mX'}= \vzero$), rather than enforcing hard constraints, we rely on the minimization of quantization errors to suppress interference between the different token types in Eq.~(\ref{eq:loss}). 
\end{tcolorbox}
\vspace{2mm}

\textbf{Core implementation.} In practice, FreeAct is simple to implement. Since $\mP$ and $\mP'$ are built upon the shared orthogonal matrix $\widetilde\mP$, there is no additional memory cost to store $\mP$ and $\mP'$. To adapt FreeAct, the first step is to identify different activations. We can directly index the tokens by different \texttt{token\_id}, like \texttt{[MASK]} for diffusion models, and \texttt{<IMG>} for MLLMs. Once $\{r,r_{1},r_{2}\}$ are determined, we can slice $\widetilde\mP$ to yield $\mP$ and $\mP'$. We provide the detailed implementation in Appendix~\ref{appendix:implementation} and algorithm flow in Algorithm~\ref{alg:freeact}. The slicing procedure can be executed after the transformation. Here we provide the pseudo-code to adapt FreeAct with only three lines of code added before the quantization, as shown below.

\begin{tcolorbox}
    [colframe=gray!10, colback=gray!10, coltitle=black, top=2pt, bottom=2pt, arc=0pt,left=-5pt, right=0pt] \begin{lstlisting}[language=Python, belowskip=0mm,]
    inp = inp @ P # Naive transformation
    inp[index][..., -r1:] = 0 
    inp[~index][..., -r1 -r2:-r1] = 0
    \end{lstlisting}
\end{tcolorbox}

\input{sections/main_table}

\textbf{Additional enhancement.} We also adopt more techniques to further enhance the quantization performance. \textit{(1) Learnable Clip Threshold.} This threshold manages the reasonable maximum value of activations or weights, enforcing the smoothness by clipping the values beyond the given threshold. We also free the clip threshold of the activation side to offer more flexibility. \textit{(2) Per-Channel Scale.} This technique operates under the observation of the pattern of different channels, magnifying or shrinking values per channel. The scale for activation and weight is entangled to keep the mathematical equivalence, \textit{i.e.}, $\mY = (\mX\cdot\text{diag}(\vc))(\text{diag}(\vc)^{-1}\cdot\mW)$. \textit{(3) Kronecker Product.} To alleviate the computation on matrices $\mP$, we adopt two lightweight matrices to construct a larger one, \textit{i.e.}, $\mP:=\mP_{\text{l}}\otimes\mP_{\text{r}}$, following \cite{chee2023quip, sun2024flatquant}. These techniques are compatible with our proposed method, enabling synergies that boost quantization.

\textbf{Integrations.} We integrate FreeAct with both diffusion and multimodal large language models, which adopt the transformation architecture. We employ the matrix transformation for all linear layers in self-attention and feed-forward modules. For the vision tower in MLLMs, we directly implement FlatQuant due to its homogeneous pattern in activations, where only vision tokens are processed. The specific integration of FreeAct for different models, like LLaDA and QwenVL, is detailed in Appendix~\ref{appendix:implemetation_recipe}.

\textbf{Extensions.} FreeAct provides a sound conceptual and practical design for two different activations. However, the philosophy of FreeAct can be extended to 1) more flexible matrix constructions and 2) more modalities. We discuss the possibilities of these two extensions below and leave them for future exploration. 
(1) Proposition~\ref{prop:beyond_inverse} suggests a larger solution space beyond the sole inverse, showing promise to relax the handcraft to a more flexible matrix learned by better optimization objectives. 
There is a trivial implementation of flex-mode for FreeAct by enlarging the quantization space. In this case, $r_1 = r_2 = d$. It does not assume the explicit shared components \textit{i.e.}, $r=0$ and $r_1+r_2 = 2d$, while increasing the memory cost. 
(2) Moreover, due to the emergence of modalities, like audio~\cite{xu2025qwen2}, FreeAct is capable of involving more modalities if more patterns of activations are identified. The trivial extension is adding a new portion $\mU_{\mX''}\in\mathbb{R}^{d\times r_3}$ that satisfies $r+r_1+r_2+r_3 = d$. In addition, Multimodal Diffusion Large Language Models (MDLLMs)~\cite{yang2025mmada} can be viewed as a hybrid problem. To simplify the analysis, in this paper, we do not implement FreeAct with MDLLMs.

%% file: sections/main_table.tex
\begin{table*}[t]
\centering
\caption{\textbf{Overall performance comparison} of FreeAct against quantization methods across four models. The best result in each case is marked in bold, and the averaged value is under the column ``Avg.''.}
\vspace{-2mm}
\label{tab:overall_performance}
\resizebox{0.9\textwidth}{!}{
\begin{tabular}{l|cccc|cccc} 
\toprule
                    & \multicolumn{4}{c|}{\textbf{dLLMs}}                               & \multicolumn{4}{c}{\textbf{MLLMs}}                                 \\
                                                             & HumanEval      & GSM8K          & Math500        & Avg.           & MMMU           & MMBench        & RealworldQA    & Avg.            \\ 
\cdashline{2-9}
                                                             & \multicolumn{4}{c|}{\textit{LLaDA}}                               & \multicolumn{4}{c}{\textit{Qwen2.5-VL}}                            \\ 
\midrule
\textcolor{lightgray}{16-bit Baseline}                                              & \textcolor{lightgray}{43.90}          & \textcolor{lightgray}{76.80}          & \textcolor{lightgray}{38.40}          & \textcolor{lightgray}{53.03}          & \textcolor{lightgray}{50.44}          & \textcolor{lightgray}{83.25}          & \textcolor{lightgray}{68.50}          & \textcolor{lightgray}{67.40}           \\
\textcolor{lightgray}{RTN (W8A8)}                                                   & \textcolor{lightgray}{39.02}          & \textcolor{lightgray}{77.56}          & \textcolor{lightgray}{37.00}          & \textcolor{lightgray}{51.19}          & \textcolor{lightgray}{49.33}          & \textcolor{lightgray}{82.38}          & \textcolor{lightgray}{68.50}          & \textcolor{lightgray}{66.74}           \\ 
\midrule
RTN (W4A4)                                                   & 00.00          & 00.00          & 00.14          & 00.05          & 26.33          & 00.09          & 00.65          & 09.02           \\ 
\midrule
SmoothQuant                                                  & 00.00          & 00.00          & 00.00          & 00.00          & 24.41          & 00.43          & 01.17          & 08.67           \\
QuaRot                                                       & 28.66          & 65.81          & 24.20          & 39.56          & 43.56          & 72.51          & 56.34          & 57.47           \\
FlatQuant                                                    & 39.63          & 74.37          & 33.60          & 49.20          & 48.89          & 81.61          & \textbf{66.27} & 65.59           \\ 
\midrule
\rowcolor[rgb]{0.949,0.949,0.949} \textbf{FreeAct (ours)}  & \textbf{42.68} & \textbf{75.74} & \textbf{34.60} & \textbf{51.01} & \textbf{50.44} & \textbf{82.22} & 66.01          & \textbf{66.22}  \\ 
\toprule
                                                             & \multicolumn{4}{c|}{\textit{Dream}}                               & \multicolumn{4}{c}{\textit{InternVL2.5}}                           \\ 
\midrule
\textcolor{lightgray}{16-bit Baseline}                                              & \textcolor{lightgray}{56.10}          & \textcolor{lightgray}{68.16}          & \textcolor{lightgray}{42.20}          & \textcolor{lightgray}{55.49}          & \textcolor{lightgray}{53.22}          & \textcolor{lightgray}{84.71}          & \textcolor{lightgray}{68.76}          & \textcolor{lightgray}{68.90}           \\
\textcolor{lightgray}{RTN (W8A8)}                                                   & \textcolor{lightgray}{48.78}          & \textcolor{lightgray}{68.76}          & \textcolor{lightgray}{40.80}          & \textcolor{lightgray}{52.78}          & \textcolor{lightgray}{44.33}          & \textcolor{lightgray}{74.74}          & \textcolor{lightgray}{63.66}          & \textcolor{lightgray}{60.91}           \\ 
\midrule
RTN (W4A4)                                                   & 00.00          & 00.53          & 0.018          & 00.18          & 22.78          & 00.20          & 00.78          & 07.92            \\ 
\midrule
SmoothQuant                                                  & 00.00          & 00.00          & 0.014          & 00.05          & 23.44          & 00.94          & 01.05          & 08.47            \\
QuaRot                                                       & 39.02          & 56.56          & 25.80          & 40.46          & 42.19          & 63.66          & 54.25          & 53.36           \\
FlatQuant                                                    & 46.34          & 63.53          & \textbf{34.60} & 48.16          & 42.11          & 69.84          & 57.65          & 56.53           \\ 
\midrule
\rowcolor[rgb]{0.949,0.949,0.949} \textbf{FreeAct (ours) } & \textbf{50.00} & \textbf{68.84} & 32.60          & \textbf{50.48} & \textbf{42.44} & \textbf{70.62} & \textbf{58.69} & \textbf{57.25}  \\
\bottomrule
\end{tabular}
}
\vspace{-2mm}
\end{table*}

%% file: sections/experiments.tex
\section{Experiments}
In this section, we conduct experiments to address the following research question:
\begin{itemize}[leftmargin=*]
    \item~\textbf{RQ1}: Does our method outperform SOTA quantization methods for dLLMs and MLLMs across diverse benchmarks, to demonstrate its effectiveness?  
    \item~\textbf{RQ2}: Has the rank-deficiency been verified to support FreeAct? And what is the effect of the additional component that enhances FreeAct?
    \item~\textbf{RQ3}: How does FreeAct perform during the training and inference? 
\end{itemize}

\subsection{Experimental Setup}

\textbf{Calibration \& benchmark datasets.} We randomly sample 128 sampels with 2048 tokens each from  WikiText2~\cite{merity2016pointer} to construct the calibration datasets for dLLMs. We randomly select 256 samples with varying lengths from COCO-Cap~\cite{lin2014microsoft} for MLLMs calibration, including one image and one query each. 
To evaluate these two LLM types, we select HumanEval~\cite{chen2021evaluating}, GSM8K~\cite{cobbe2021training} (4-shot), and Math500~\cite{hendrycks2021measuring} (4-shot) for text tasks on dLLMs. For MLLMs, we select vision-relevant tasks, including MMMU~\cite{yue2024mmmu}, MMBench~\cite{liu2024mmbench}, and RealworldQA\footnote{\url{https://huggingface.co/datasets/xai-org/RealworldQA}.}.
We also report the average value as an overall performance indicator. The dataset descriptions are detailed in Appendix~\ref{appendix:cal_dataset} and~\ref{appendix:eval_dataset}.

\textbf{Base models \& baselines.} 
We select two models for each LLM paradigm, including LLaDA~\cite{nie2025large}, Dream~\cite{ye2025dream}, Qwen2.5VL~\cite{bai2025qwen2}, and InternVL2.5~\cite{chen2024internvl}. Built upon these models, we test the quantization quality of FreeAct against RTN, SmoothQuant~\cite{xiao2023smoothquant}, QuaRot~\cite{ashkboos2024quarot}, and FlatQuant~\cite{sun2024flatquant} under the W4A4 setting. In particular, we also report the results of the 16-bit baseline and RTN in W8A8 for a comprehensive comparison. The details of base models and baselines are shown in Appendix~\ref{appendix:base_models}.

\textbf{Implementation details.} With collected calibration datasets, we tune the quantization parameters, including the transformation matrices and clip thresholds, with the AdamW optimizer. The learning rate is set to 1e-3, and the number of training epochs is 15 with a batch size of 4. These settings are kept the same for all the methods. For FreeAct, we set the ratios $r_1=r_2=d/k$, and set $k=32$ by default for simplicity. We also select the front and back layers for FreeAct (see the fully applied version in Section~\ref{sec:further_analysis}). We provide more implementation recipes in Appendix~\ref{appendix:implemetation_recipe}.

\subsection{Overall Performance (RQ1)}

We report the results across different models, datasets, and quantization methods in Table~\ref{tab:overall_performance}. Analyzing the comparative results offers the following three key observations:

\ding{172} \textbf{FreeAct outperforms all quantization baselines, achieving SOTA performance {in most scenarios}.} Across both dLLM benchmarks (\textit{e.g.}, HumanEval and GSM8K) and MLLM benchmarks (\textit{e.g.,} MMMU and MMBench), FreeAct surpasses existing W4A4 methods in most cases. Notably, it demonstrates superior accuracy retention compared to strong baselines like FlatQuant, achieving up to 5.3\% improvement. In several tasks, FreeAct recovers the performance to a level comparable with W8A8 RTN and the 16-bit base models, like in Dream and Qwen2.5VL, highlighting its efficacy in mitigating quantization noise in ultra-low bit-width settings.

\ding{173} \textbf{The transformation method is better than simple channel-wise quantization (SmoothQuant).} The results indicate a clear performance gap between transformation-based methods (FreeAct, FlatQuant, QuaRot) and simple scaling methods like SmoothQuant. While SmoothQuant mitigates outliers via channel-wise scaling, it struggles to fully suppress quantization errors in the presence of complex activation patterns under the low-bit scenario. FreeAct, by employing a more sophisticated transformation strategy, effectively flattens the activation distribution, making it more amenable to W4A4 quantization.

\ding{174} \textbf{The larger flexibility on the transformation matrices provides the possibility to achieve better quantization quality.} Comparing FreeAct against FlatQuant and QuaRot reveals that increased flexibility within the transformation matrix is crucial. FlatQuant eases the Hadamard transformations in QuaRot and employs an affine flexibility, yet it remains tethered to a one-to-one mapping. FreeAct evolves this by breaking this constraint, allowing for a many-to-one transformation. This increased flexibility allows FreeAct to better accommodate diverse activation distributions, resulting in superior quantization quality.

\subsection{Ablation Study (RQ2)}

\begin{figure}
    \centering
    \includegraphics[width=\linewidth]{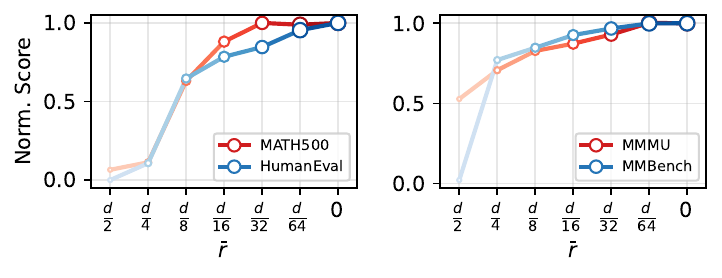}
    \vspace{-9mm}
    \caption{Performance comparison under different rank-deficient settings, tested on LLaDA (\textit{left}) and Qwen2.5VL (\textit{right}).}
    \vspace{-5mm}
    \label{fig:rank_ablation}
\end{figure}

\textbf{Rank-deficient verification.} We further investigate the necessity of full-rank transformations by implementing low-rank one-to-one transformation variants. Comparisons on two different models are reported in Figure~\ref{fig:rank_ablation}, where $\bar{r}$ denotes the number of removed dimensions, and 0 represents the full-rank transformation. 
We can observe that a large portion of the removal is incapable, while removing $d/32$ or $d/64$ can approach the upper bound performance. These results confirm our motivation for the rank-deficiency and the feasibility of developing FreeAct. 

\begin{table}
\centering
\caption{Performance comparison with FreeAct without a learnable clip threshold (L.C.).}
\vspace{-2mm}
\label{tab:sole_transform}
\resizebox{\linewidth}{!}{
\begin{tabular}{l|cc!{\vrule width \lightrulewidth}cc} 
\toprule
                 & \multicolumn{2}{c!{\vrule width \lightrulewidth}}{\textit{LLaDA}}                              & \multicolumn{2}{c}{\textit{Dream}}                                                              \\
                 & Math                                      & HumanEval                                 & Math                                      & HumanEval                                  \\ 
\midrule
FreeAct$_\text{w/o L.C.}$ & 73.09                                     & 34.15                                     & 55.65                                     & 34.14                                      \\
FreeAct          & \textbf{\textbf{\textbf{\textbf{75.74}}}} & \textbf{\textbf{\textbf{\textbf{34.60}}}} & \textbf{\textbf{\textbf{\textbf{68.84}}}} & \textbf{\textbf{\textbf{\textbf{50.00}}}}  \\ 
\midrule
                 & \multicolumn{2}{c!{\vrule width \lightrulewidth}}{\textit{Qwen2.5VL}}                          & \multicolumn{2}{c}{\textit{InternVL2.5}}                                                        \\
                 & MMMU                                      & MMBench                                   & MMMU                                      & MMBench                                    \\ 
\midrule
FreeAct$_\text{w/o L.C.}$ & 46.44                                     & 79.30                                     & 40.22                                     & 65.12                                      \\
FreeAct          & \textbf{\textbf{50.44}}                   & \textbf{\textbf{82.22}}                   & \textbf{\textbf{42.44}}                   & \textbf{\textbf{70.62}}                    \\
\bottomrule
\end{tabular}
}
\vspace{-4mm}
\end{table}

\textbf{The effect of learnable clip threshold.} We isolate the contribution of the transformation matrix from clipping threshold tuning to evaluate a ``clean'' version of FreeAct (w/o L.C.) where only the transformation matrix is learned. The results indicate that the optimized transformation matrix is the primary driver of FreeAct's success in most cases, like LLaDA. The clip threshold also contributes to capturing the dynamics, especially in Dream, where performance drops with a noticeable margin, suggesting a synergy produced by both the transformation and clip threshold. 

\subsection{Further Analysis (RQ3)}
\label{sec:further_analysis}
\textbf{The loss of quantization errors.} We analyze the training dynamics of FreeAct by plotting the quantization error over time as shown in Figure~\ref{fig:loss_combine}\footnote{The complete curves for four models are in Appendix~\ref{appendix:activation_vis}.}. We observe a consistent decrease in Mean Squared Error (MSE) for both types of activations. We also observe the different quantization error ranges between different types ($\mX>\mX'$), which suggests the need for a dynamic process. The convergence behavior confirms that FreeAct effectively optimizes the quantization parameters during the calibration phase, progressively minimizing the information loss caused by low-bit quantization.

\begin{figure}
    \centering
    \includegraphics[width=\linewidth]{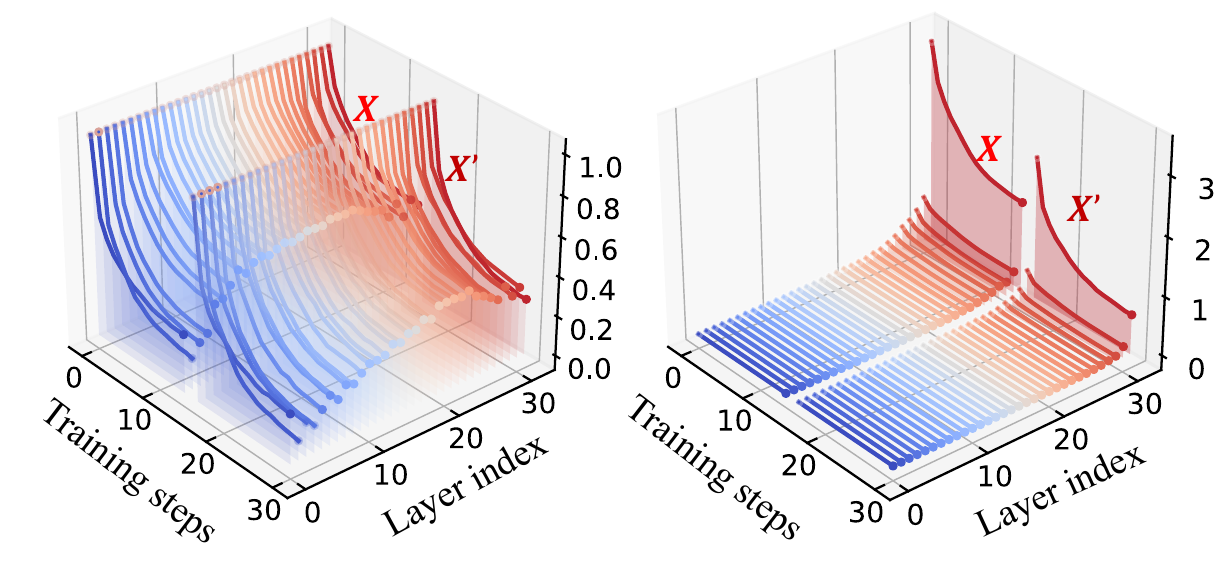}
    \vspace{-8mm}
    \caption{Quantization errors optimized along with the training steps across different layers. $\mX$ and $\mX'$ denote different activation types, \textit{i.e.}, masked and unmasked token. The values in the left figure are normalized per layer.}
    \label{fig:loss_combine}
    \vspace{-4mm}
\end{figure}

\begin{figure}[t]
    \centering
    \vspace{-2mm}
    \includegraphics[width=\linewidth]{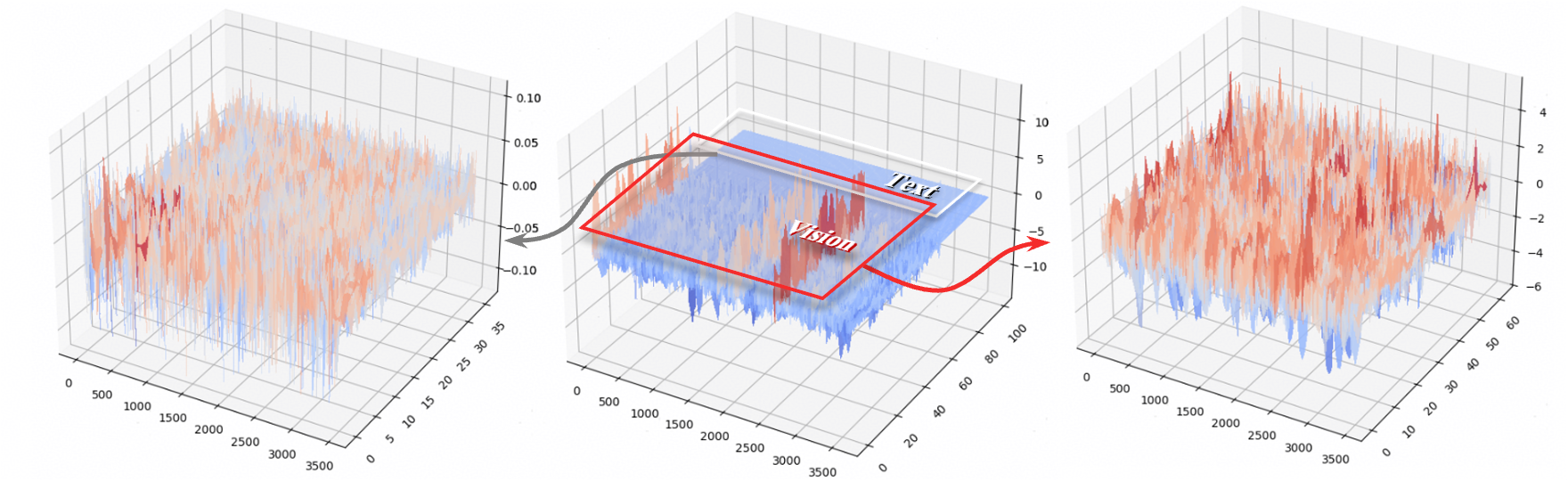}
    \caption{Activation distributions after the transformation.}
    \vspace{-2mm}
    \label{fig:act_vis}
\end{figure}

\textbf{Activation visualization after transformation.}
We illustrate the smoothed activation ranges after applying the FreeAct transformation in Figure~\ref{fig:act_vis}. Compared to the original activations. Applying the transformation successfully redistributes the activation values into a narrower, more uniform range. This smoothing effect ensures that the limited bins available in W4A4 quantization are utilized more efficiently, directly correlating with the reduced quantization error observed in our quantitative analysis.

\textbf{Applying all layers with the fixed ratio.}
FreeAct provides more flexibility on the transformation matrix for each layer. However, the fixed mask ratio for all layers is not optimal. We show the comparison results with this fully applied version, FreeAct$_\text{full}$, as shown in Figure~\ref{fig:compare}. Simply enforcing the activation isolation in FreeAct to every layer would lead to a worse performance in some cases, like on LLaDA or Dream, or perform well on Qwen2.5VL. This suggests the promise of FreeAct, where sophisticated strategies would further boost FreeAct. 

\begin{figure}
    \centering
    \vspace{-3mm}
    \includegraphics[width=\linewidth]{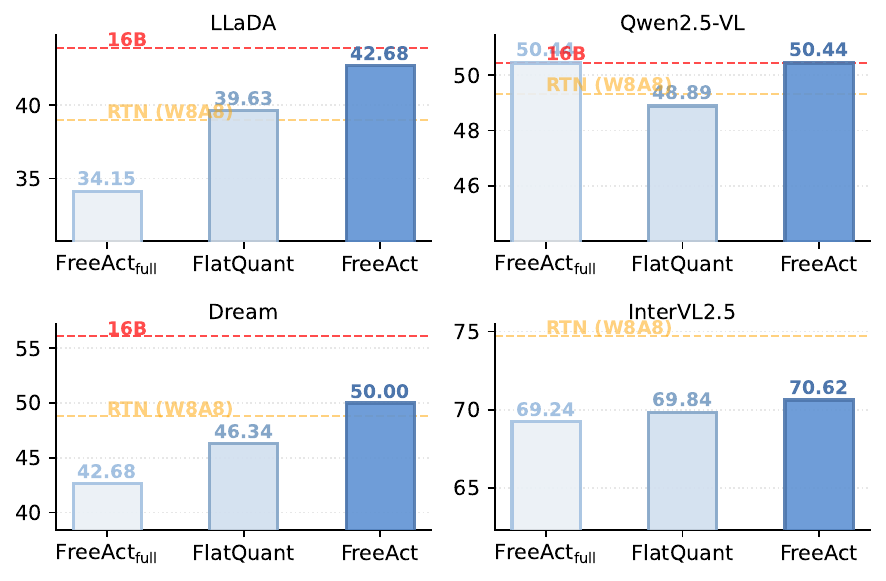}
    \vspace{-6mm}
    \caption{Performance comparison across FlatQuant and fully applied FreeAct.}
    \label{fig:compare}
    \vspace{-6mm}
\end{figure}

%% file: sections/discussions.tex
\section{Discussions and Outlook}
In this paper, we introduced FreeAct, a novel post-training quantization framework that breaks the conventional one-to-one transformation constraint prevalent in current LLM quantization methods. By identifying the limitations of static transformations in handling the dynamic activation patterns of advanced models, like dLLMs and MLLMs, we proposed a paradigm shift towards flexible activation transformations.
Theoretical analysis of the rank-deficient nature of activations allowed us to decouple the activation transformation from the weight transformation. FreeAct successfully allocates distinct transformation matrices to different token types while maintaining a unified, static transformation for weights. Our extensive experiments demonstrate that FreeAct effectively mitigates quantization errors arising from dynamic disparities. FreeAct establishes a robust foundation for dynamic transformation in quantization, paving several promising avenues for future explorations: 1) extending this framework to more modalities or hybrid model architectures, 2) investigating hardware-kernel co-design for full exploitation, and 3) generalizing to automatically token identification rather than indexing on predefined types.

%% file: sections/appendix.tex
\section{Theoretical Analysis}

\subsection{Proof of Proposition~\ref{prop:beyond_inverse}}
\label{appendix:beyond_inverse}

\textit{Recall.} \textit{To satisfy $\mX\mP\widetilde{\mP}\mW^{\top}= \mX\mW^{\top}$, the product $\mP\widetilde{\mP}$ belongs to the set
\[
    \Big\{ \mZ - \mP_{\mX}(\mZ- \mI)\mP_{\mW}\;\Big|\; \mZ \in \mathbb{R}^{d \times d}\Big\} \supseteq \{\mI\},
\]
which strictly contains the singleton $\{\mathbf{I}\}$. Consequently, the constraint admits a solution space that is strictly larger than the set of exact inverses (for which $\mP\widetilde{\mP}= \mI$). }
\begin{proof}
    \begin{align}
        \mX\mP\widetilde{\mP}\mW^{\top}= \mX\mW^{\top}\implies \mX\mP\widetilde{\mP}\mW^{\top}- \mX\mW^{\top} & = 0 \\
        \mX(\mP\widetilde{\mP}- \mI)\mW^{\top}                                                                & = 0
    \end{align}
    According to the general solution for $\mA\mX\mB=\mC$, see Lemma~\ref{lem:AXB=C}, we have:
    \begin{align}
        \mX(\mP\widetilde{\mP}- \mI)\mW^{\top}= 0\implies \mP\widetilde\mP -\mI & = \mZ -\mA^{\dagger}\mA\mZ\mB\mB^{\dagger}                                 \\
        \mP\widetilde\mP                                                        & = \mZ -\mA^{\dagger}\mA\mZ\mB\mB^{\dagger}+ \mI                           &  \\
                                                                                & = \mZ - \mA^{\dagger}\mA (\mZ-\mI) \mB\mB^{\dagger}\quad ( \mZ:=\mZ - \mI)
    \end{align}
\end{proof}
\begin{lem}
    [Solving $\mA\mX\mB=\mC$~\cite{penrose1955generalized}] A necessary and sufficient condition for the equation $\mA\mX\mB= \mC$ to have a solution is $\mA\mA^{\dagger}\mC\mB^{\dagger}\mB= \mC$, in which case the general solution is
    \begin{align}
        \mX= \mA^{\dagger}\mC\mB^{\dagger}+ \mZ -\mA^{\dagger}\mA\mZ\mB\mB^{\dagger},
    \end{align}
    where $\mZ$ is arbitrary. \label{lem:AXB=C}
\end{lem}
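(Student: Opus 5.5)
We argue directly from the defining Penrose identities of the Moore--Penrose pseudoinverse, namely $\mA\mA^{\dagger}\mA=\mA$ and $\mB\mB^{\dagger}\mB=\mB$. There are three parts: the necessity of the consistency condition, its sufficiency, and the claim that the displayed family is exactly the full solution set. Throughout, the unknown $\mX$ and the free parameter $\mZ$ have compatible shapes, so that $\mA\mX\mB$ and $\mC$ have the same size.

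\textbf{Step 1 (necessity).} Suppose some $\mX$ satisfies $\mA\mX\mB=\mC$. We substitute this into the condition:
\begin{align}
\mA\mA^{\dagger}\mC\mB^{\dagger}\mB=\mA\mA^{\dagger}\mA\mX\mB\mB^{\dagger}\mB=\mA\mX\mB=\mC,
\end{align}
where the middle equality uses both Penrose identities.

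\textbf{Step 2 (sufficiency).} Assume the condition $\mA\mA^{\dagger}\mC\mB^{\dagger}\mB=\mC$ holds. The candidate $\mX_0=\mA^{\dagger}\mC\mB^{\dagger}$ then gives $\mA\mX_0\mB=\mA\mA^{\dagger}\mC\mB^{\dagger}\mB=\mC$, so it is a particular solution.

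\textbf{Step 3 (every member of the family is a solution).} For arbitrary $\mZ$, the extra term contributes zero:
\begin{align}
\mA\big(\mZ-\mA^{\dagger}\mA\mZ\mB\mB^{\dagger}\big)\mB=\mA\mZ\mB-\mA\mA^{\dagger}\mA\mZ\mB\mB^{\dagger}\mB=\mA\mZ\mB-\mA\mZ\mB=0.
\end{align}
Adding this to the particular solution of Step 2 shows that every matrix of the displayed form solves $\mA\mX\mB=\mC$.

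\textbf{Step 4 (every solution is in the family).} This is the step that needs an idea: for a given solution we must produce a suitable $\mZ$. The choice that works is the solution itself, $\mZ:=\mX$. Since $\mA\mX\mB=\mC$, we have
\begin{align}
\mA^{\dagger}\mA\mX\mB\mB^{\dagger}=\mA^{\dagger}\mC\mB^{\dagger}.
\end{align}
Substituting $\mZ=\mX$ into the formula therefore gives
\begin{align}
\mA^{\dagger}\mC\mB^{\dagger}+\mX-\mA^{\dagger}\mC\mB^{\dagger}=\mX.
\end{align}
So every solution is of the stated form, and the general solution is established.

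The whole argument uses only the identities $\mA\mA^{\dagger}\mA=\mA$ and $\mB\mB^{\dagger}\mB=\mB$; no rank or invertibility assumption on $\mA$ or $\mB$ is needed. This is exactly what the proof of Proposition~\ref{prop:beyond_inverse} requires, since there $\mA=\mX$ is a rank-deficient activation matrix and $\mC=0$.
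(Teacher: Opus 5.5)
Your proof is correct and follows essentially the same route as the paper: necessity by inserting $\mA\mA^{\dagger}\mA=\mA$ and $\mB\mB^{\dagger}\mB=\mB$, the particular solution $\mA^{\dagger}\mC\mB^{\dagger}$, the check that $\mZ-\mA^{\dagger}\mA\mZ\mB\mB^{\dagger}$ is annihilated, and completeness by taking $\mZ$ to be the given solution. The only difference is cosmetic: the paper makes the choice $\mZ:=\mX$ for the homogeneous equation $\mA\mX\mB=0$, you make it directly for $\mA\mX\mB=\mC$, and you state the sufficiency of the consistency condition more explicitly than the paper does.
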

\begin{proof}
    \begin{align}
        \mA\mX\mB=\mC \implies \mC = \mA\mX\mB = \mA\mA^{\dagger}\mA \mX\mB\mB^{\dagger}\mB = \mA(\mA^{\dagger} \mC\mB^{\dagger})\mB.
    \end{align}
    $\mA^{\dagger}\mC\mB^{\dagger}$ is the particular solution of $\mA\mX\mB=\mC$. For the general solution, we are solving $\mA\mX\mB = 0$.
    \begin{align}
        \mX = \mZ - \mA^{\dagger}\mA\mZ\mB\mB^{\dagger}\implies \mA\mX\mB & = \mA \left(\mZ - \mA^{\dagger}\mA\mZ\mB\mB^{\dagger}\right)\mB \\
                                                                          & = \mA\mZ\mB - \mA\mA^{\dagger}\mA \mZ \mB \mB^{\dagger}\mB      \\
                                                                          & = \mA\mZ\mB - \mA\mZ\mB = 0
    \end{align}
    If $\mX$ is one solution of $\mA\mX\mB = 0$, then
    \begin{align}
        \mA\mX\mB = 0 \implies \mX = \mX - 0 = \mX - \mA^{\dagger}(\mA\mX\mB)\mB^{\dagger}
    \end{align}
\end{proof}

\subsection{Proof of Theorem~\ref{thm:equivalence}}
\label{appendix:equivalence}

\textit{Recall.} \textit{Given $\mU, \mU_{\mX}, \mU_{\mX'}$ be semi-orthogonal basis matrices forming an orthogonal decomposition of $\mathbb{R}^{d}$, such that $\mX\mU_{\mX'}= \vzero$ and $\mX'\mU_{\mX}= \vzero$. If we define the orthogonal matrix $\widetilde{\mP}= [\mU, \mU_{\mX}, \mU_{\mX'}]^{\top}$ and the masked transformations $\mP = [\mU, \mU_{\mX}, \vzero]$ and $\mP' = [\mU, \vzero, \mU_{\mX'}]$, then for any weight matrix $\mW$, the equalities
\[
    \mX\mP\widetilde{\mP}\mW^{\top}= \mX\mW^{\top}\quad \text{and}\quad \mX'\mP'\widetilde{\mP}\mW^{\top}= \mX'\mW^{\top}
\]hold.}

\begin{proof}
    Give the identity property of the orthogonal matrix $\widetilde{\mP}$, we have:
    \begin{align}
        \widetilde{\mP}\widetilde{\mP}^{\top} & = \big[ \mU, \mU_{\mX}, \mU_{\mX'}\big] \begin{bmatrix}\mU^{\top}\\ \mU_{\mX}^{\top}\\ \mU_{\mX'}^{\top}\end{bmatrix} \\
                                              & = \mU\mU^{\top}+ \mU_{\mX}\mU_{\mX}^{\top}+ \mU_{\mX'}\mU_{\mX'}^{\top}                                               \\
                                              & = \mI_{d}
    \end{align}
    Since the rank-deficiency of $\mX$, we have $\text{rank}(\mX)=r+r_{1}<d$. We can find a matric $\mU_{\mX'}\in\sR^{d\times r_2}, r_{2}= d - (r+r_{1})$ satisfy $\mX\mU_{\mX'}\mU_{\mX'}^{\top}= \vzero$, where $\mU_{\mX'}\mU_{\mX'}^{\top}$ projects $\mX$ to the its kernel space. Therefore,
    \begin{align}
        \mX\mP \widetilde{\mP}\mW^{\top} & = \mX\big[ \mU, \mU_{\mX}, \vzero \big] \begin{bmatrix}\mU^{\top}\\ \mU_{\mX}^{\top}\\ \mU_{\mX'}^{\top}\end{bmatrix}\mW^{\top} \\
                                         & = \mX(\mU\mU^{\top}+ \mU_{\mX}\mU_{\mX}^{\top})\mW^{\top}                                                                       \\
                                         & =\mX(\mI_{d}- \mU_{\mX'}\mU_{\mX'}^{\top})\mW^{\top}\quad \text{($\widetilde{\mP}\widetilde{\mP}^{\top}= \mI_{d}$)}             \\
                                         & = \mX\mW^{\top}- \underbrace{\mX\mU_{\mX'}\mU_{\mX'}^{\top}}_{\vzero}\mW^{\top}= \mX\mW^{\top}
    \end{align}
    This also holds for $mX'$, for its rank-deficiency $\text{rank}(\mX')=r+r_{2}<d$. Therefore:
    \begin{align}
        \mX'\mP' \widetilde{\mP}\mW^{\top} & = \mX'(\mI_{d}- \mU_{\mX}\mU_{\mX}^{\top})\mW^{\top}= \mX'\mW^{\top}- \mX'\mU_{\mX}\mU_{\mX}^{\top}\mW^{\top}= \mX'\mW^{\top}.
    \end{align}
\end{proof}

\input{sections/related_work}

\section{Implementation Details}
\label{appendix:implementation}

\subsection{Calibration Datasets}
\label{appendix:cal_dataset}
\begin{itemize}
    \item \textbf{WikiText2}~\cite{merity2016pointer}\footnote{\url{https://huggingface.co/datasets/Salesforce/wikitext}} is a language modeling dataset comprising approximately 2 million tokens on Wikipedia. It maintains a vocabulary size of 33,278 words and retains the original document structure, including full punctuation and capitalization. We directly concatenate different sequences and split them into 128 samples with 2048 tokens for quantization calibration of dLLMs. 
    \item \textbf{COCO-Cap}~\cite{lin2014microsoft}\footnote{\url{https://huggingface.co/datasets/lmms-lab/COCO-Caption}} is a large-scale multimodal collection consisting of approximately 120,000 images derived from the MS COCO dataset, with each image annotated with five independent human-generated captions. We randomly select 256 samples from its validation split, serving as the calibration dataset for MLLM quantization.
\end{itemize}

\subsection{Evaluation Benchmarks}
\label{appendix:eval_dataset}

\begin{itemize} 
\item \textbf{GSM8K}~\cite{cobbe2021training} is a dataset of approximately 8,500 high-quality, linguistically diverse grade school math word problems created by human writers. We employ its test split, which contains 1,319 examples in total. 
\item \textbf{Human-Eval}~\cite{chen2021evaluating} is a code generation benchmark released by OpenAI, consisting of 164 hand-written programming problems in Python. Each problem includes a function signature, a docstring, and a set of hidden unit tests. 
\item \textbf{Math500}~\cite{hendrycks2021measuring} is a curated subset of the larger MATH dataset, containing 500 challenging mathematics problems covering diverse subjects such as algebra, geometry, and calculus. It serves as a robust benchmark for evaluating the advanced mathematical problem-solving capabilities of LLMs. 
\end{itemize}

\begin{itemize} 
\item \textbf{MMMU}~\cite{yue2024mmmu} is a large-scale benchmark designed to evaluate expert-level multimodal reasoning. It comprises approximately 11,500 questions derived from college exams, quizzes, and textbooks across six core disciplines and 30 distinct subjects. We use its validation split, comprising 900 multiple-choice questions. 
\item \textbf{MMBench}~\cite{liu2024mmbench} is a comprehensive evaluation pipeline designed to assess the fine-grained perception and reasoning abilities of MLLMs. It employs a circular evaluation strategy with multiple-choice questions to ensure robust performance assessment across various perceptual and cognitive dimensions. 
\item \textbf{RealworldQA} is a benchmark designed to test spatial and physical reasoning in real-world scenarios. It features high-quality images taken from vehicles and egocentric views, challenging models to answer questions about object relations and environmental context in unconstrained, realistic settings. 
\end{itemize}

The evaluation of dLLMs is evaluated by \texttt{LM\_Eval}\footnote{\url{https://github.com/EleutherAI/lm-evaluation-harness}} and further accelerated with \texttt{Fast-dLLM}\footnote{\url{https://github.com/NVlabs/Fast-dLLM}}. The evaluation of MLLMs is evaluated by \texttt{lmms-eval}\footnote{\url{https://github.com/EvolvingLMMs-Lab/lmms-eval}}. 

\subsection{Base Models}
\label{appendix:base_models}

\begin{itemize}
    \item \textbf{LLaDA}~\cite{nie2025large} is a generative Large Language Model built on a diffusion architecture by employing a masked diffusion strategy. It iteratively denoises the entire sequence in parallel during both training and inference, demonstrating that diffusion models can achieve generation quality and in-context learning capabilities comparable to autoregressive baselines. We use its instruction version, \texttt{LLaDA-8B-Instruct}, for our evaluation.
    \item \textbf{Dream}~\cite{ye2025dream} is a diffusion large language model that employs discrete diffusion modeling to optimize the sequences in parallel. Dream is initialized with weights from Qwen2.5 7B and is continually pre-trained to adapt the diffusion architecture. We use its instruction version, \texttt{Dream-v0-Instruct-7B}, for our evaluation.
    
    \item \textbf{QwenVL}~\cite{bai2025qwen2} is a vision-language model built upon the Qwen foundation, designed to handle high-resolution image inputs and fine-grained visual grounding tasks. It incorporates position-aware visual adapters that allow it to perform complex tasks, establishing itself as a robust open-source benchmark for general-purpose multimodal perception and interaction. We use \texttt{Qwen2.5-VL-7B-Instruct} for our evaluation.
    
    \item \textbf{InternVL}~\cite{chen2024internvl} employs a progressive scaling strategy that aligns a 6B vision encoder with smaller LLMs before seamlessly transferring weights to larger models to cut computational costs. The well-designed data pipelines and the advanced reweighting technique that balance gradients across diverse response lengths and enhance the visual understanding. Here we use \texttt{InternVL2\_5-8B} for our evaluation.
\end{itemize}

\subsection{Baselines}

\begin{itemize}
    \item \textbf{RTN} is the straightforward quantization strategy that maps floating-point values to the nearest integer representation without additional optimization or calibration.
    \item \textbf{SmoothQuant}~\cite{xiao2023smoothquant} uses a diagnoal scales to smooth activation outliers by migrating the quantization difficulty from activations to weights.
    \item \textbf{QuaRot}~\cite{ashkboos2024quarot} uses randomized Hadamard transforms to rotate weights and activations into a space where outliers are suppressed, enabling outlier-free 4-bit quantization.
    \item \textbf{FlatQuant}~\cite{sun2024flatquant} is designed to improve low-bit quantization by flattening the activation distributions using similar rotation techniques, specifically optimized for efficient deployment on hardware.

\end{itemize}

\subsection{Implementation Recipe} 
\label{appendix:implemetation_recipe}

In the following, we first review the one-to-one transformation recipe, exemplified by FlatQuant. Then we provide a comprehensive implementation of FreeAct.

\textbf{One-to-one recipe. } 
Give a linear layer equipped with weight $\mW\in\mathbb{R}^{d'\times d}$, the input activation is denoted as$\mX\in\mathbb{R}^{n\times d}$, satisfying $\mY = \mX\mW^{\top}\in\mathbb{R}^{n\times d'}$. For the weight side, $\widetilde\mW \gets \mP_{l}^{-1}\widetilde{\times\mW/\diag{\vc}}\times(\mP_{r}^{-1})^{\top}$. And $\mP_{l}:= \mU_{l}\times\diag{\boldsymbol{\vs}_l}\times\mV_{l}^{\top}$ is constructed via SVD to guarantee the invertibility of $\mP$ . $\mU$ and $\mV$ are both orthogonal matrices, and $\vs$ is the vector containing singular values. The transformed weight$\widetilde\mW$is then clipped, operating  $\text{clamp}(\widetilde\mW, s^{\min}_{\widetilde\mW}, s^{\max}_{\widetilde\mW})$, where  $s^{\min}_{\widetilde\mW}:= (\min{\widetilde\mW})\cdot\sigma(\alpha^{\min}_{\mW})$ and $s^{\max}_{\widetilde\mW}:= (\max{\widetilde\mW})\cdot\sigma(\alpha^{\max}_{\mW})$. $\alpha$ is the learnable clip threshold. For the activation side, similarly, $\widetilde\mX \gets \mP_{l}^{\top}\widetilde{\times\mX\cdot\diag{\vc}}\times\mP_{r}$, followed by $\text{clamp}(\widetilde\mX, s^{\min}_{\widetilde\mX}, s^{\max}_{\widetilde\mX})$. After the transformation, we adopt the quantizer from GPTQ. Specifically, the quantization scale for weight is estimated by the minimum and maximum values per channel without MSE search. For the activation, the scale is estimated per token.

\textit{Equivalence.} The equivalence of one-to-one transformation methods can be simply verified by the following equations: $\widetilde{\mX}\widetilde{\mW}^{\top}= (\mP_{l}^{\top}\widetilde{\times\mX\cdot\diag{\vc}}\times\mP_{r})(\mP_{l}^{-1}\widetilde{\times\mW/\diag{\vc}}\times(\mP_{r}^{-1})^{\top})^{\top}= \mX\mP\times\mP^{-1}\mW^{\top}= \mX\mW^{\top}.$

\textbf{More-to-one recipe (FreeAct).} The processing for one linear layer mostly follows the one-to-one recipe. Let $\mX := \widetilde{\mX\cdot\diag{\vc}}$ and $\mW := \widetilde{\mW/\diag{\vc}}$, we assign the same projection matrix for different activations, \textit{i.e.}, $\mX$ and $\mX'$, yet masking its different portions. For instance, $\widetilde{\mX} := (\mP_{l}^{\top}\widetilde{\times\mX\cdot\diag{\vc}}\times\mP_{r}) \odot \vm$, where $\vm \in \{0,1\}^{d}$ is a binary vector and $\odot$ represents the element wise multiplication. $\vm$ controls the utilization of different components for different activation types. For other parts, we keep the same as the one-to-one recipe. The \textit{equivalence} of the more-to-one recipe is proved in Appendix~\ref{appendix:equivalence}.

\textbf{Implementation for dLLMs.} 
\label{appendix:implementation_dllm}
We utilize the LLaDA and Dream implementations from Fast-dLLM to facilitate inference via its optimized caching techniques. The maximum number of generated tokens is set to 256 using 8 time blocks. For calibration, we construct 128 examples containing 2,048 text tokens followed by 256 mask tokens. The activation indices are determined by the \texttt{[MASK]} token ID prior to the next denoising step; consequently, we do not need to explicitly simulate distinct diffusion steps. To accelerate the calibration process, we save all input and output hidden states per layer and train these layers in parallel.

\textbf{Implementation for MLLMs.} 
\label{appendix:implementation_mllm}
We adopt tailored strategies for the various components within the MLLMs. For the LM backbone, we employ FreeAct as it effectively handles diverse activation inputs, such as vision and text. Given that the sequence length for each calibration example is relatively short, we use 256 examples for the calibration process. Activation indices are determined by the \texttt{[IMG]} token ID during prefilling. We apply FlatQuant to the vision model due to its homogeneous tokens and adopt RTN for the MLP projector. Similar to the dLLM setup, we save all input and output hidden states for both the vision encoder and LM decoder to facilitate parallelized training and faster calibration.

\subsection{Hyperparameter Settings}
\label{appendix:hyperparameters}

We employ the AdamW optimizer~\cite{loshchilov2017decoupled} to optimize the quantization parameters, including the transformation matrices and the clip threshold, while freezing the model parameters. We set the learning rate as $1\times10^{-3}$ and train the parameters on calibrated datasets for 15 epochs. The number of samples is 128 for dLLMs and 256 for MLLMs. The transformation matrix is initialized as a random orthogonal matrix and is restricted to being orthogonal during the optimization. The clip threshold is initialized as 4.0. We keep these hyperparameters the same for both baselines and FreeAct. The key hyperparameters for FreeAct are $\{r,r_1, r_2\}$. To simplify the implementation, we set $r_1 = r_2 = d/k$. We set $k=32$ by default.

\subsection{Detailed Algorithm Flow}
\label{appendix:algorithm}

In this section, we provide the training algorithm flow of FreeAct in Algorithm~\ref{alg:freeact}. 
\begin{algorithm}[h]
\caption{FreeAct  Algorithm Flow}
\label{alg:freeact}
\begin{algorithmic}[1]
\REQUIRE 
    Calibration dataset $\mathcal{D} = \{\mX_k\}_{k=1}^N$ where $\mX_k \in \sR^{L \times d}$, pre-trained weights $\mW \in \sR^{d \times d_{out}}$, $r, r_1, r_2$.
\ENSURE 
    \STATE Optimized orthogonal matrix $\widetilde{\mP} \in \sR^{d \times d}$. Other quantization parameters (scales, clip threshold).

\STATE \textbf{Initialization:}
\STATE Initialize $\widetilde{\mP}\in{d \times d}$ as a random orthogonal matrix.

\WHILE{not converged}
    \FOR{each batch of input activations $\hat{\mX} \in \sR^{B \times L \times d}$ in $\mathcal{D}$}
        \STATE \textbf{\ding{172} Token Indexing.}\\
        \hskip 2em Identify indices based on token types (e.g., Vision/Text or Unmasked/Masked):\\
        \hskip 2em $\mathcal{G} := \{i \mid \mathcal{I}(\hat{\mX}_{:,i,:}) \text{ is Type A}\}$
        \hskip 2em $\mathcal{G}' := \{i \mid \mathcal{I}(\hat{\mX}_{:,i,:}) \text{ is Type B}\}$
        \STATE \textbf{\ding{173} Dynamic Allocation.}\\
        \hskip 2em $\widetilde{\mP} \leftarrow [\mU, \mU_{\mX}, \mU_{\mX'}]^\top$ \hskip 2em $\rhd$ $\mU \in \sR^{d \times r}$
        \hskip 2em $\mU_{\mX} \in \sR^{d \times r_1}$ 
        \hskip 2em $\mU_{\mX'} \in \sR^{d \times r_2}$ \\
        \hskip 2em $\mP \leftarrow [\mU, \mU_{\mX}, \vzero]\ \in \mathbb{R}^{d \times d}$, where $\vzero \in \sR^{d \times r_2}$\\
        \hskip 2em $\mP' \leftarrow [\mU, \vzero, \mU_{\mX'}]\in \mathbb{R}^{d \times d}$, where $\vzero \in \sR^{d \times r_1}$

        \STATE \textbf{\ding{174} Grouping}\\
        \hskip 2em $\mX \leftarrow \hat{\mX}_{:,\mathcal{G},:} \in \mathbb{R}^{B \times |\mathcal{G}| \times d}$\\
        \hskip 2em $\mX' \leftarrow \hat{\mX}_{:,\mathcal{G}',:} \in \mathbb{R}^{B \times |\mathcal{G}'| \times d}$

        \STATE \textbf{\ding{175} Transformation \& Quantization} \\
        \hskip 2em  $\widetilde{\mW} \leftarrow \widetilde{\mP} \mW$
        \quad  $\widetilde{\mX} \leftarrow \mX \mP'$
        \quad  $\widetilde{\mX'} \leftarrow \mX' \mP'$
        
        \hskip 2em $\mY_{q} \leftarrow \gQ(\widetilde{\mX}) \gQ(\widetilde{\mW})$ \quad $\mY'_{q} \leftarrow \gQ(\widetilde{\mX'}) \gQ(\widetilde{\mW})$

        \STATE \textbf{\ding{176} Optimization}\\
        \hskip 2em $\mathcal{L} \leftarrow \|\mX\mW - \mY_{q}\|_2^2 + \|\mX'\mW - \mY'_{q}\|_2^2$  \quad(Eq.~(\ref{eq:loss})) \\
        \hskip 2em  Update $\widetilde{\mP}$ and quantization params to minimize $\mathcal{L}$.
    \ENDFOR
\ENDWHILE
\end{algorithmic}
\end{algorithm}

\section{Additional Results}
\label{appendix:additional_res}

In this section, we provide the full illustrations of activations per layer across different models.

\begin{figure*}
    \centering
    \includegraphics[width=\linewidth]{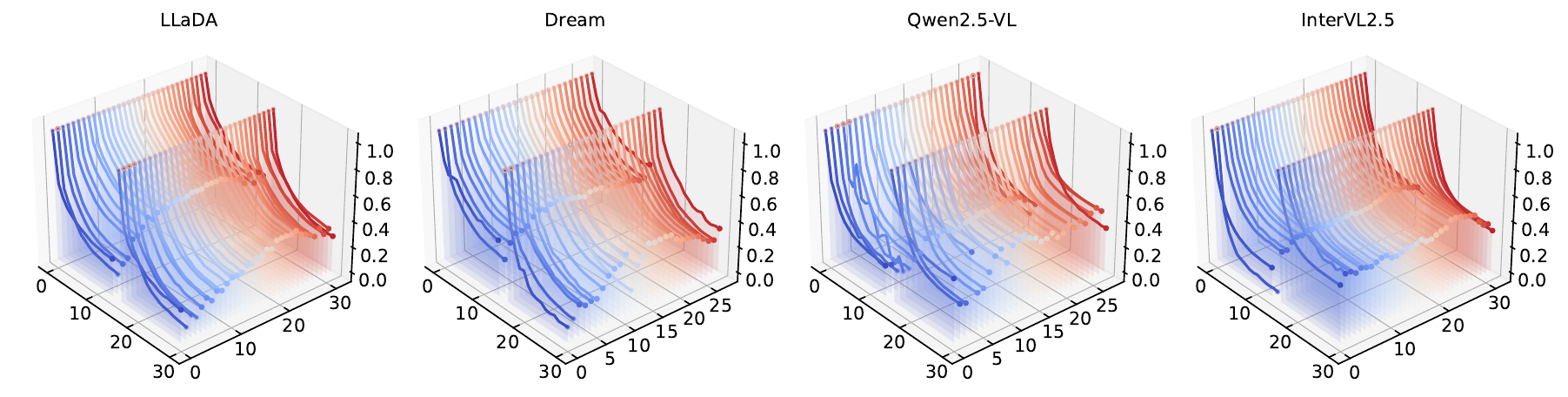}
    \caption{The full losses trained on the calibration dataset for four models. The quantization errors are measured per layer and grouped in terms of different activation types. To better show the differences, the numerical values are normalized. }
    \label{fig:losses_norm}
\end{figure*}

\begin{figure*}
    \centering
    \includegraphics[width=\linewidth]{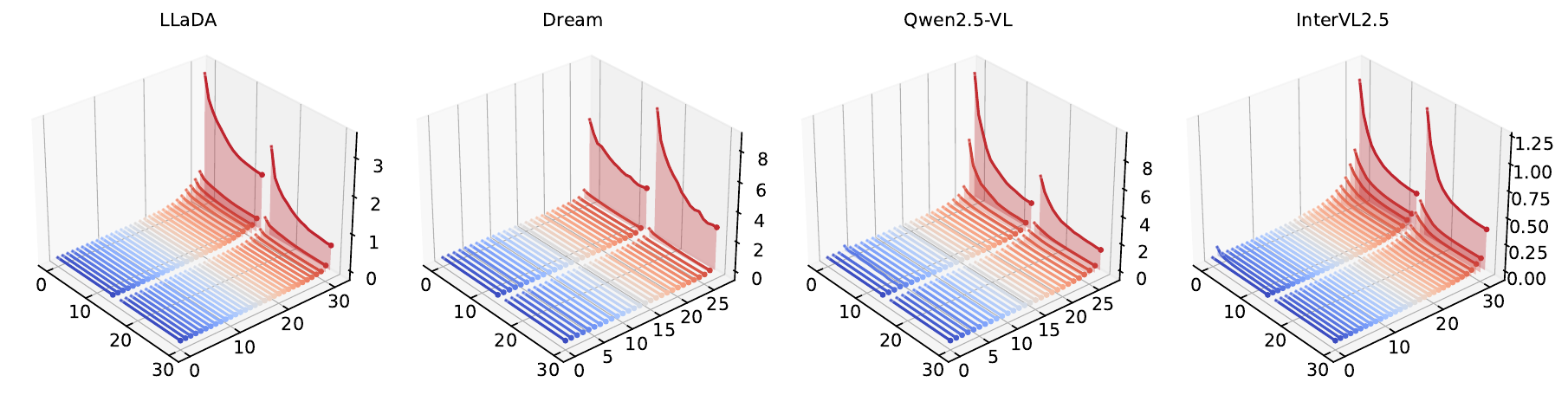}
    \caption{The full losses trained on the calibration dataset for four models. The quantization errors are measured per layer and grouped in terms of different activation types.}
    \label{fig:losses}
\end{figure*}

\subsection{Activation Visualization}
\label{appendix:activation_vis}

We provide the full details of activation distributions for all layers across four evaluated models, as shown in Figures~\ref{fig:llada_activations}, ~\ref{fig:dream_activations}, ~\ref{fig:qwen25vl_activations}, and ~\ref{fig:internvl25_activations}. We also provide the full quantization errors across four models in Figures~\ref{fig:losses_norm} and~\ref{fig:losses}.

\begin{figure}
    \centering
    \includegraphics[width=\linewidth]{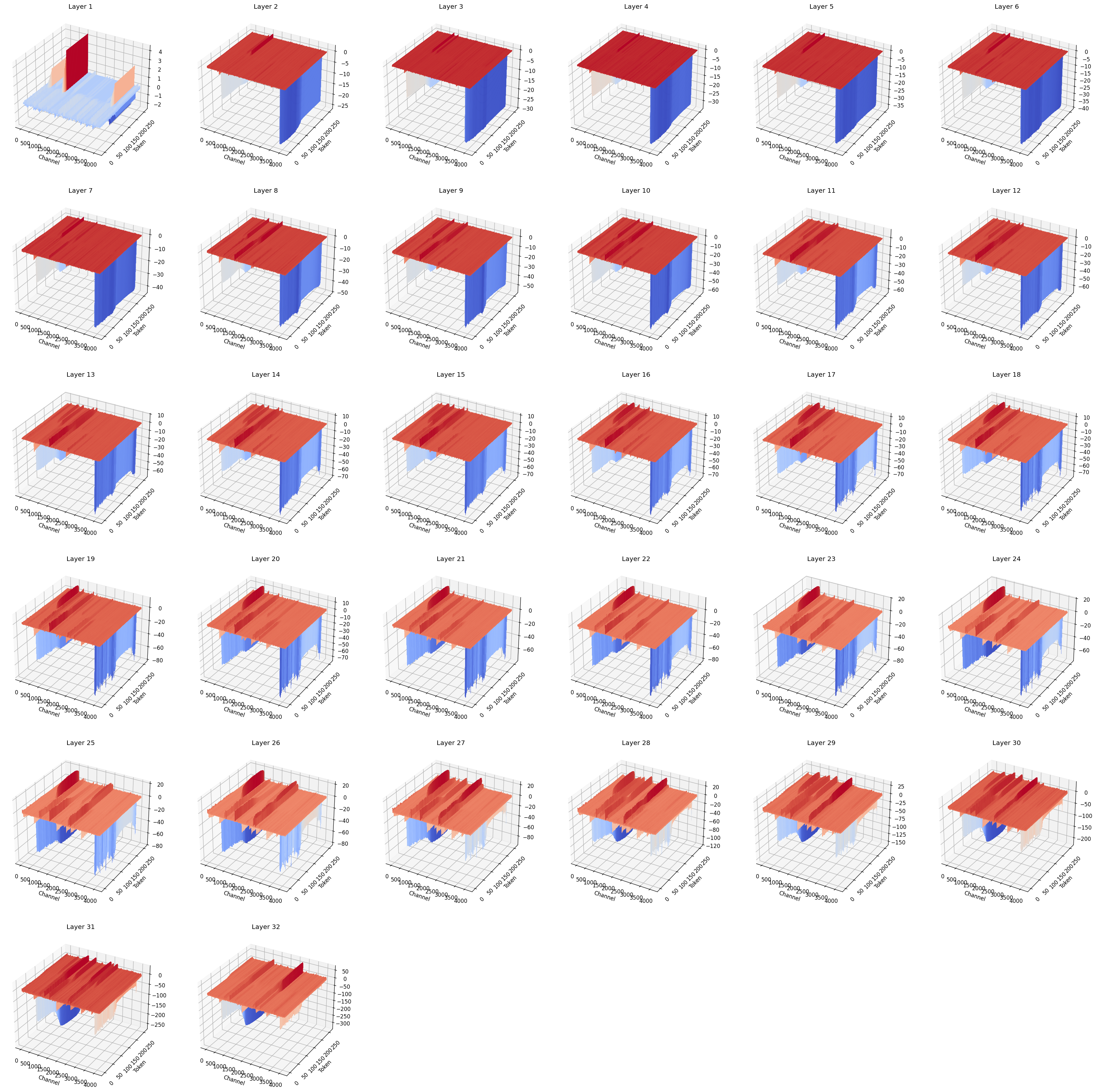}
    \caption{Illustrative examples of activation distributions across different layers on \textit{LLaDA-8B-Instruct}. There is a clear dividing line between unmasked and masked tokens.}
    \label{fig:llada_activations}
\end{figure}

\begin{figure}
    \centering
    \includegraphics[width=\linewidth]{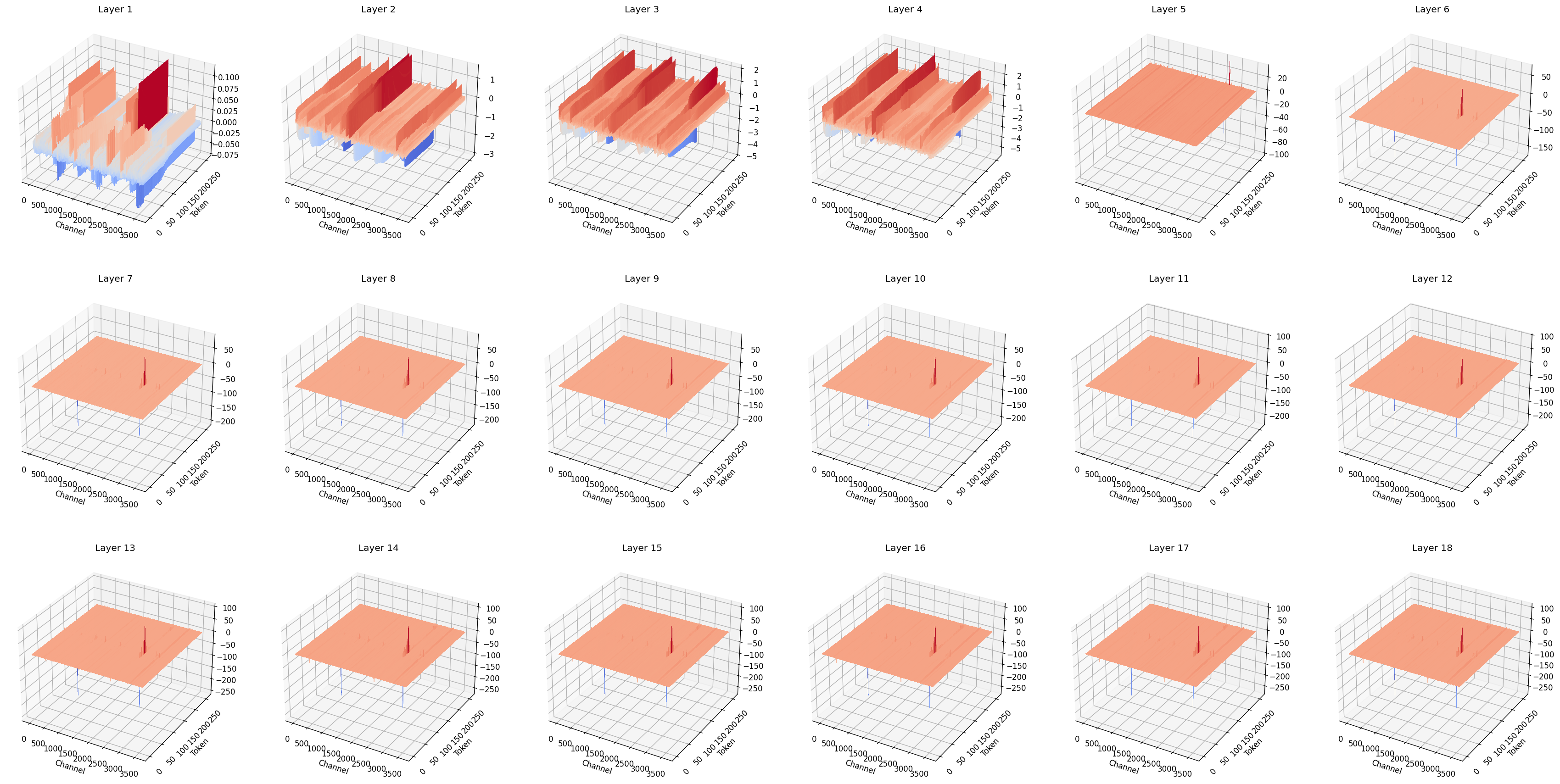}
    \caption{Illustrative examples of activation distributions across different layers on \textit{Dream-v0-Instruct-7B}. There is a clear dividing line and spikes to distinguish unmasked and masked tokens.}
    \label{fig:dream_activations}
\end{figure}

\begin{figure}
    \centering
    \includegraphics[width=\linewidth]{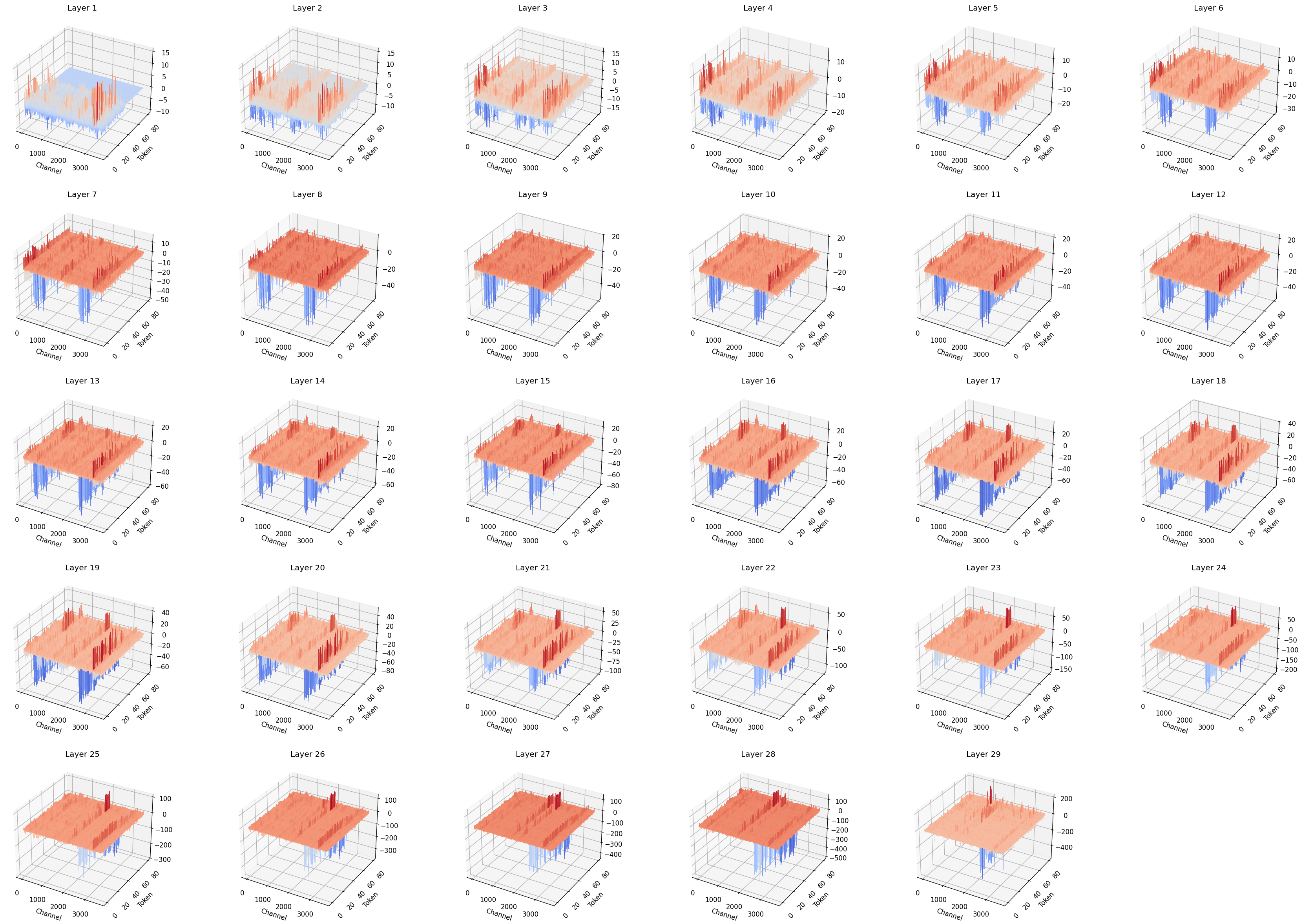}
    \caption{Illustrative examples of activation distributions across different layers on \textit{Qwen2.5-VL-7B-Instruct}. Only the language tower is plotted. There is a clear dividing line to distinguish vision and text tokens.}
    \label{fig:qwen25vl_activations}
\end{figure}

\begin{figure}
    \centering
    \includegraphics[width=\linewidth]{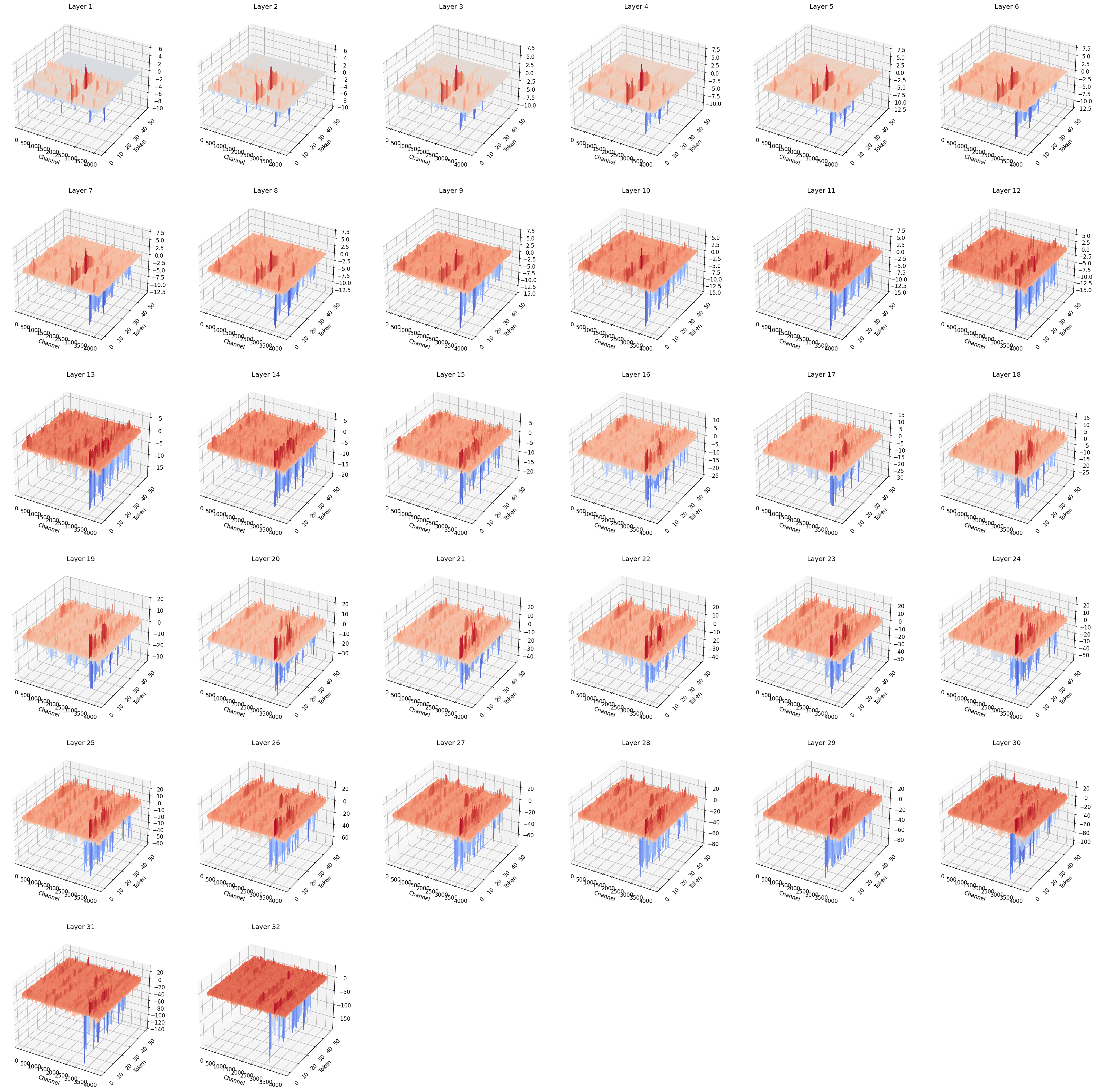}
    \caption{Illustrative examples of activation distributions across different layers on \textit{InternVL2.5-8B}. Only the language tower is plotted. There is a clear dividing line to distinguish vision and text tokens.}
    \label{fig:internvl25_activations}
\end{figure}

\subsection{Case Studies}
\label{appendix:case_studies}

In this section, we provide the cases generated by different models to demonstrate the effectiveness of FreeAct~(see Figures~\ref{fig:case_llada}, \ref{fig:case_dream}, \ref{fig:case_qwenvl}, and \ref{fig:case_internvl}).

\begin{figure}[ht]
\centering
\begin{promptbox}
\textbf{Prompt:} \begin{lstlisting}[
    language=Python,
    basicstyle=\small\ttfamily\color{black},
    keywordstyle=\color{black},
    commentstyle=\color{black},
    stringstyle=\color{black},
    showstringspaces=false,
    breaklines=true,
    numbers=none,
    xleftmargin=0pt
]
Janet s ducks lay 16 eggs per day. She eats three for breakfast every morning and bakes muffins for her friends every day with four. She sells the remainder at the farmers market daily for $2 per fresh duck egg. How much in dollars does she make every day at the farmers market?
\end{lstlisting}

\vspace{0.2em}
\textbf{Response:} 
\begin{baselinebox}
\raggedright
\begin{lstlisting}[
    language=python,
    basicstyle=\footnotesize\ttfamily\color{darkgreen!80},
    keywordstyle=\color{darkgreen!80},
    commentstyle=\color{darkgreen!80},
    stringstyle=\color{darkgreen!80},
    literate={(model)}{{\color{darkgreen!80}\bfseries\normalsize\rmfamily{BF16}}}{4},
    showstringspaces=false,
    breaklines=true,
    numbers=none,
    xleftmargin=0pt
]
(model) Janet calculates how many eggs she has left after her breakfast and her friends' breakfast. She eats 3 eggs for breakfast and bakes 4 eggs for her friends, so she uses a total of 3 + 4 = 7 eggs per day.\n\nHer's ducks lay 16 eggs per day, so she has 16 - 7 = 9 eggs left.\n\nShe sells these 9 eggs at $2 per egg, so she makes 9 * $2 = $<<9*2=18>>18 per day.\n\nTherefore, Janet makes $18 every day at the farmers' market.\n######## 18
\end{lstlisting}

\noindent\rule{\linewidth}{0.2pt}

\vspace{0.2em}
\begin{lstlisting}[
    language=Python,
    basicstyle=\footnotesize\ttfamily\color{bad},
    keywordstyle=\color{bad},
    commentstyle=\color{bad},
    stringstyle=\color{bad},
    literate={(model)}{{\color{bad}\bfseries\normalsize\rmfamily{RTN (W4A4)}}}{10},
    showstringspaces=false,
    breaklines=true,
    numbers=none,
    xleftmargin=0pt
]
(model)  000000<$ 10%00 100000000000000 100000000000000000000000000000000000000000000000000000000000000000....
\end{lstlisting}
\end{baselinebox}

\vspace{-0.7em}

\begin{freequantbox}
\raggedright
\begin{lstlisting}[
    language=Python,
    basicstyle=\footnotesize\ttfamily\color{darkgreen},
    keywordstyle=\color{darkgreen},
    commentstyle=\color{darkgreen},
    stringstyle=\color{darkgreen},
    literate={(model)}{{\color{darkgreen}\bfseries\normalsize\rmfamily{FreeAct (W4A4)}}}{15},
    showstringspaces=false,
    breaklines=true,
    numbers=none,
    xleftmargin=0pt
]
(model) Janet calculates how many eggs she has left after her breakfast and her friends' breakfast. She eats 3 eggs for breakfast and bakes 4 eggs for her friends, so she uses a total of 3 + 4 = 7 eggs per day.\n\nHer's ducks lay 16 eggs per day, so she has 16 - 7 = 9 eggs left.\n\nShe sells these 9 eggs at $2 per egg, so she makes 9 * $2 = $<<9*2=18>>18 per day.\n\nTherefore, Janet makes $18 every day at the farmers' market.\n######## 18
\end{lstlisting}
\end{freequantbox}
\end{promptbox}
\caption{Case study of how FreeAct integrates into \textit{LLaDA} and performs under the W4A4 quantization setting on the GSM8K dataset, compared with the RTN method, which fails to respond. }
\label{fig:case_llada}
\end{figure}

\begin{figure}[ht]
\centering
\begin{promptbox}
\textbf{Prompt:} \begin{lstlisting}[
    language=python,
    basicstyle=\small\ttfamily\color{black},
    keywordstyle=\color{black},
    commentstyle=\color{black},
    stringstyle=\color{black},
    showstringspaces=false,
    breaklines=true,
    numbers=none,
    xleftmargin=0pt
]
Janet s ducks lay 16 eggs per day. She eats three for breakfast every morning and bakes muffins for her friends every day with four. She sells the remainder at the farmers market daily for $2 per fresh duck egg. How much in dollars does she make every day at the farmers market?
\end{lstlisting}

\vspace{0.2em}
\textbf{Response:} 
\begin{baselinebox}
\raggedright
\begin{lstlisting}[
    language=python,
    basicstyle=\footnotesize\ttfamily\color{darkgreen!80},
    keywordstyle=\color{darkgreen!80},
    commentstyle=\color{darkgreen!80},
    stringstyle=\color{darkgreen!80},
    literate={(model)}{{\color{darkgreen!80}\bfseries\normalsize\rmfamily{BF16}}}{4},
    showstringspaces=false,
    breaklines=true,
    numbers=none,
    xleftmargin=0pt
]
(model)  Janet eats 3 eggs for breakfast and bakes 4 eggs for muffins, so she uses 3 + 4 = <<3+4=7>>7 eggs per day.\nShe then sells 16 - 7 = <<16-7=9>>9 eggs per day.\nShe sells 9 eggs for $2 each, so she makes 9 * $2 = <<9*2=18>>$18 per day at the farmers' market.\n#### 18
\end{lstlisting}

\noindent\rule{\linewidth}{0.2pt}

\vspace{0.2em}
\begin{lstlisting}[
    language=Python,
    basicstyle=\footnotesize\ttfamily\color{bad},
    keywordstyle=\color{bad},
    commentstyle=\color{bad},
    stringstyle=\color{bad},
    literate={(model)}{{\color{bad}\bfseries\normalsize\rmfamily{RTN (W4A4)}}}{10},
    showstringspaces=false,
    breaklines=true,
    numbers=none,
    xleftmargin=0pt
]
(model)    relico00 flashback relicl musebrates    Lucifer:   relic relic relic  inn1P1 relic( relic relic0  brates lodge0 relicp relic s hop relic relich pregnancies relic1 relicog relic1 relic\n relic relic nip skate hen conco
\end{lstlisting}
\end{baselinebox}

\vspace{-0.7em}

\begin{freequantbox}
\raggedright
\begin{lstlisting}[
    language=Python,
    basicstyle=\footnotesize\ttfamily\color{darkgreen},
    keywordstyle=\color{darkgreen},
    commentstyle=\color{darkgreen},
    stringstyle=\color{darkgreen},
    literate={(model)}{{\color{darkgreen}\bfseries\normalsize\rmfamily{FreeAct (W4A4)}}}{15},
    showstringspaces=false,
    breaklines=true,
    numbers=none,
    xleftmargin=0pt
]
(model)  Janet eats 3 eggs for breakfast and bakes 4 eggs for muffins, so she uses 3 + 4 = <<3+4=7>>7 eggs per day.\nShe then sells 16 - 7 = <<16-7=9>>9 eggs per day.\nShe sells 9 eggs for $2 each, so she makes 9 * $2 = <<9*2=18>>$18 per day at the farmers' market.\n#### 18
\end{lstlisting}
\end{freequantbox}
\end{promptbox}
\caption{Case study of how FreeAct integrates into \textit{Dream} and performs under the W4A4 quantization setting on the GSM8K dataset, compared with the RTN method, which fails to respond. }
\label{fig:case_dream}
\end{figure}

\begin{figure}[ht]
\centering
\begin{promptbox}
\textbf{Prompt:} \begin{lstlisting}[
    language=Python,
    basicstyle=\small\ttfamily\color{black},
    keywordstyle=\color{black},
    commentstyle=\color{black},
    stringstyle=\color{black},
    showstringspaces=false,
    breaklines=true,
    numbers=none,
    xleftmargin=0pt
]
<image 1> Baxter Company has a relevant range of production between 15,000 and 30,000 units. The following cost data represents average variable costs per unit for 25,000 units of production. If 30,000 units are produced, what are the per unit manufacturing overhead costs incurred?\nA. $6\nB. $7\nC. $8\nD. $9\n\nAnswer with the option's letter from the given choices directly.
\end{lstlisting}

\vspace{0.2em}
\textbf{Response:} 
\begin{baselinebox}
\raggedright
\begin{lstlisting}[
    language=python,
    basicstyle=\footnotesize\ttfamily\color{darkgreen!80},
    keywordstyle=\color{darkgreen!80},
    commentstyle=\color{darkgreen!80},
    stringstyle=\color{darkgreen!80},
    literate={(model)}{{\color{darkgreen!80}\bfseries\normalsize\rmfamily{BF16}}}{4},
    showstringspaces=false,
    breaklines=true,
    numbers=none,
    xleftmargin=0pt
]
(model)  B
\end{lstlisting}

\noindent\rule{\linewidth}{0.2pt}

\vspace{0.2em}
\begin{lstlisting}[
    language=Python,
    basicstyle=\footnotesize\ttfamily\color{bad},
    keywordstyle=\color{bad},
    commentstyle=\color{bad},
    stringstyle=\color{bad},
    literate={(model)}{{\color{bad}\bfseries\normalsize\rmfamily{RTN (W4A4)}}}{10},
    showstringspaces=false,
    breaklines=true,
    numbers=none,
    xleftmargin=0pt
]
(model)  Gluten  \n\n\n  2 -Pack-gend    Stainless ( 1   Spare Bottle-Pack    Clamp  \n 10-Packerrated  Mounted1 Movement A S 1/&ombine S1   1 4 Aluminum  0 4   0  0 Wholesale  Decorating  Color  2-Packlasses Clamp 1 1S lideh1-Pack Qty5-Pack  Polymerdden  Cabinets1gend-Packreff-Pack -Pack   ringe -Pack Bulk  Lines2
\end{lstlisting}
\end{baselinebox}

\vspace{-0.7em}

\begin{freequantbox}
\raggedright
\begin{lstlisting}[
    language=Python,
    basicstyle=\footnotesize\ttfamily\color{darkgreen},
    keywordstyle=\color{darkgreen},
    commentstyle=\color{darkgreen},
    stringstyle=\color{darkgreen},
    literate={(model)}{{\color{darkgreen}\bfseries\normalsize\rmfamily{FreeAct (W4A4)}}}{15},
    showstringspaces=false,
    breaklines=true,
    numbers=none,
    xleftmargin=0pt
]
(model) B
\end{lstlisting}
\end{freequantbox}
\end{promptbox}
\caption{Case study of how FreeAct integrates into \textit{QWen2.5 VL} and performs under the W4A4 quantization setting on the MMMU dataset, compared with the RTN method, which fails to respond. }
\label{fig:case_qwenvl}
\end{figure}

\begin{figure}[ht]
\centering
\begin{promptbox}
\textbf{Prompt:} \begin{lstlisting}[
    language=Python,
    basicstyle=\small\ttfamily\color{black},
    keywordstyle=\color{black},
    commentstyle=\color{black},
    stringstyle=\color{black},
    showstringspaces=false,
    breaklines=true,
    numbers=none,
    xleftmargin=0pt
]
<image 1> Baxter Company has a relevant range of production between 15,000 and 30,000 units. The following cost data represents average variable costs per unit for 25,000 units of production. If 30,000 units are produced, what are the per unit manufacturing overhead costs incurred?\nA. $6\nB. $7\nC. $8\nD. $9\n\nAnswer with the option's letter from the given choices directly.
\end{lstlisting}

\vspace{0.2em}
\textbf{Response:} 
\begin{baselinebox}
\raggedright
\begin{lstlisting}[
    language=python,
    basicstyle=\footnotesize\ttfamily\color{darkgreen!80},
    keywordstyle=\color{darkgreen!80},
    commentstyle=\color{darkgreen!80},
    stringstyle=\color{darkgreen!80},
    literate={(model)}{{\color{darkgreen!80}\bfseries\normalsize\rmfamily{BF16}}}{4},
    showstringspaces=false,
    breaklines=true,
    numbers=none,
    xleftmargin=0pt
]
(model) B
\end{lstlisting}

\noindent\rule{\linewidth}{0.2pt}

\vspace{0.2em}
\begin{lstlisting}[
    language=Python,
    basicstyle=\footnotesize\ttfamily\color{bad},
    keywordstyle=\color{bad},
    commentstyle=\color{bad},
    stringstyle=\color{bad},
    literate={(model)}{{\color{bad}\bfseries\normalsize\rmfamily{RTN (W4A4)}}}{10},
    showstringspaces=false,
    breaklines=true,
    numbers=none,
    xleftmargin=0pt
]
(model) In n 1, 1 5,000-00 0 s/ 1.5/0. 5/2. The 2010,000 sp?/ 1.5.0 are then 5/5/10*5 10/0/10/ 5/6?/ $The The average costs? / 10/6/5.0?. The The ten /5/5 The average cost/ 5,000/2/5 The average cost per10? /5%? The average? 5/5 5/0/5?/0
\end{lstlisting}
\end{baselinebox}

\vspace{-0.7em}

\begin{freequantbox}
\raggedright
\begin{lstlisting}[
    language=Python,
    basicstyle=\footnotesize\ttfamily\color{darkgreen},
    keywordstyle=\color{darkgreen},
    commentstyle=\color{darkgreen},
    stringstyle=\color{darkgreen},
    literate={(model)}{{\color{darkgreen}\bfseries\normalsize\rmfamily{FreeAct (W4A4)}}}{14},
    showstringspaces=false,
    breaklines=true,
    numbers=none,
    xleftmargin=0pt
]
(model) B. $7
\end{lstlisting}
\end{freequantbox}
\end{promptbox}
\caption{Case study of how FreeAct integrates into \textit{InternVL2.5} and performs under the W4A4 quantization setting on the MMMU dataset, compared with the RTN method, which fails to respond. }
\label{fig:case_internvl}
\end{figure}

\section{Reproducibility}
We provide implementation details, involving illustrative algorithm descriptions in Appendix~\ref{appendix:implementation}  and pseudo-code in Section~\ref{sec:freeact}. The source code will be publicly released for reproducibility. 

\section{Limitations}
We demonstrate the effectiveness of FreeAct through rigorous empirical validation and theoretical support, establishing its conceptual advancement in the field of quantization. Despite these strengths, several limitations remain for future exploration. 1) The current version of FreeAct cannot accept more modalities or diverse input types ($\# > 2$). It can be extended to more complex scenarios, but requires an in-depth investigation into activation distributions.
2) Due to the resource constraints, we do not scale the model size.
3) We do not deploy FreeAct offline while evaluating its effectiveness with fake quantization.

\section{Use of LLMs in Writing}\label{sec:llm_use}
We used an LLM solely to polish the writing and correct grammatical issues during the preparation of this submission. The LLM was not involved in any idea generation, experiment design, or analysis. All scientific contributions are entirely made by the authors.

%% file: sections/related_work.tex
\section{Related Work}
\label{sec:related_work}
\subsection{Multimodal and Diffusion LLMs}
Large Language Models (LLMs)~\cite{achiam2023gpt, touvron2023llama,thirunavukarasu2023large,zhao2023survey,chang2024survey,kaddour2023challenges} have demonstrated remarkable success in generating textual tokens, paving the way for advanced extensions such as Multimodal LLMs (MLLMs)~\cite{team2023gemini,luo2025next,xiao2026mimo,zhang2025mimo,sapkota2025multi,lyu2023macaw,wang2024longllava}, which integrate modalities like vision, and diffusion LLMs (dLLMs)~\cite{nie2025large,ye2025dream,wu2025fast,jin2025role,tian2025next}, which introduce a denoising mechanism on masked tokens to enable simultaneous multi-token prediction. Typically, MLLMs incorporate a visual encoder to process visual inputs, followed by a projector to align different modalities with a powerful decoder-based LLM. Since the release of ChatGPT, researchers augment open-source LLMs (\textit{e.g.}, the LLaMA series~\cite{touvron2023llama}) with visual perception capabilities, establishing a widely adopted paradigm exemplified by MiniGPT-4~\cite{zhu2023minigpt} and NExT-GPT~\cite{wu2024next}. Subsequent research has scaled data and parameters, leading to the release of Qwen-VL~\cite{bai2025qwen2}, InternVL~\cite{chen2024internvl}, and MiniCPM-V~\cite{yao2024minicpm}. Most recent works propose a unified multimodal paradigm that eliminates the separate vision projector, opting instead for a tightly coupled visual-text architecture capable of both understanding and generation. Meanwhile, dLLMs explore the prediction of multiple masked tokens, inspired by prior discrete diffusion modeling with absorbing states~\cite{nie2025large}. LLaDA~\cite{nie2025large} and Dream~\cite{ye2025dream} successfully implement this diffusion process in billion-parameter models, achieving performance comparable to purely autoregressive LLMs. In this paper, we target these two advanced paradigms and formulate a unified quantization framework for both.

\subsection{Quantization for LLMs}
Quantization significantly improves efficiency by reducing the precision, particularly for LLMs. Quantization-Aware Training (QAT) adopts low-bit precision directly during training~\cite{nagel2022overcoming,chen2025efficientqat,liu2024llm,bondarenko2024low, du2024bitdistiller}, whereas Post-Training Quantization (PTQ) is applied to pre-trained models, aiming to retain their original capabilities in a resource-efficient manner~\cite{yao2023comprehensive}. Within the PTQ paradigm, recent research is generally categorized into weight-only~\cite{frantar2022gptq, lin2024awq} and weight-activation quantization~\cite{tseng2024quip, ashkboos2024quarot, lin2024duquant}, where the latter further quantizes activations to achieve higher efficiency but presents greater challenges. Several works~\cite{shao2023omniquant, bhalgat2020lsq+} focus on managing clip thresholds and fine-grained scales to constrain value ranges. In contrast, SmoothQuant~\cite{xiao2023smoothquant} migrates the difficulty of quantization from activations to weights using a diagonal matrix, while QuaRot~\cite{ashkboos2024quarot} introduces the Hadamard transformation to mitigate outliers, establishing a one-to-one transformation that ensures mathematical equivalence. Recent studies extend this transformation-based paradigm by making these matrices learnable~\cite{lin2024duquant, hu2025ostquant} and more flexible, evolving from simple rotations~\cite{liu2024spinquant} to general affine transformations~\cite{ma2024affinequant}. Notably, FlatQuant~\cite{sun2024flatquant} incorporates the Kronecker product with affine transformations, achieving state-of-the-art quantization quality on LLMs. Researchers have also specialized these techniques for dLLMs~\cite{xu2025dllmquant, lin2025quantization, zhang2025quant} and MLLMs~\cite{yu2025mquant}, which involve different token types. However, these efforts mainly focus on optimizing scales and calibrating the dataset, setting transformation advancement aside. In this paper, we pioneer the exploration of flexible transformations beyond one-to-one for LLMs, specifically dLLMs and MLLMs, that process diverse input types with distinct activation distributions.

%% file: reference.bib
@article{loshchilov2017decoupled,
  title={Decoupled weight decay regularization},
  author={Loshchilov, Ilya and Hutter, Frank},
  journal={arXiv preprint arXiv:1711.05101},
  year={2017}
}

@article{achiam2023gpt,
  title={Gpt-4 technical report},
  author={Achiam, Josh and Adler, Steven and Agarwal, Sandhini and Ahmad, Lama and Akkaya, Ilge and Aleman, Florencia Leoni and Almeida, Diogo and Altenschmidt, Janko and Altman, Sam and Anadkat, Shyamal and others},
  journal={arXiv preprint arXiv:2303.08774},
  year={2023}
}

@article{touvron2023llama,
  title={Llama: Open and efficient foundation language models},
  author={Touvron, Hugo and Lavril, Thibaut and Izacard, Gautier and Martinet, Xavier and Lachaux, Marie-Anne and Lacroix, Timoth{\'e}e and Rozi{\`e}re, Baptiste and Goyal, Naman and Hambro, Eric and Azhar, Faisal and others},
  journal={arXiv preprint arXiv:2302.13971},
  year={2023}
}

@article{yang2025qwen3,
  title={Qwen3 technical report},
  author={Yang, An and Li, Anfeng and Yang, Baosong and Zhang, Beichen and Hui, Binyuan and Zheng, Bo and Yu, Bowen and Gao, Chang and Huang, Chengen and Lv, Chenxu and others},
  journal={arXiv preprint arXiv:2505.09388},
  year={2025}
}

@article{grattafiori2024llama,
  title={The llama 3 herd of models},
  author={Grattafiori, Aaron and Dubey, Abhimanyu and Jauhri, Abhinav and Pandey, Abhinav and Kadian, Abhishek and Al-Dahle, Ahmad and Letman, Aiesha and Mathur, Akhil and Schelten, Alan and Vaughan, Alex and others},
  journal={arXiv preprint arXiv:2407.21783},
  year={2024}
}

@inproceedings{egashira2024exploiting,
  title={Exploiting llm quantization},
  author={Egashira, Kazuki and Vero, Mark and Staab, Robin and He, Jingxuan and Vechev, Martin},
  booktitle={NeurIPS},
  pages={41709--41732},
  year={2024}
}

@article{zhou2024survey,
  title={A survey on efficient inference for large language models},
  author={Zhou, Zixuan and Ning, Xuefei and Hong, Ke and Fu, Tianyu and Xu, Jiaming and Li, Shiyao and Lou, Yuming and Wang, Luning and Yuan, Zhihang and Li, Xiuhong and others},
  journal={arXiv preprint arXiv:2404.14294},
  year={2024}
}

@inproceedings{lin2024awq,
  title={Awq: Activation-aware weight quantization for on-device llm compression and acceleration},
  author={Lin, Ji and Tang, Jiaming and Tang, Haotian and Yang, Shang and Chen, Wei-Ming and Wang, Wei-Chen and Xiao, Guangxuan and Dang, Xingyu and Gan, Chuang and Han, Song},
  booktitle={MLSys},
  pages={87--100},
  year={2024}
}

@article{tseng2024quip,
  title={Quip\#: Even better llm quantization with hadamard incoherence and lattice codebooks},
  author={Tseng, Albert and Chee, Jerry and Sun, Qingyao and Kuleshov, Volodymyr and De Sa, Christopher},
  journal={ICML},
  year={2024}
}

@article{kaplan2020scaling,
  title={Scaling laws for neural language models},
  author={Kaplan, Jared and McCandlish, Sam and Henighan, Tom and Brown, Tom B and Chess, Benjamin and Child, Rewon and Gray, Scott and Radford, Alec and Wu, Jeffrey and Amodei, Dario},
  journal={arXiv preprint arXiv:2001.08361},
  year={2020}
}

@inproceedings{chee2023quip,
  title={Quip: 2-bit quantization of large language models with guarantees},
  author={Chee, Jerry and Cai, Yaohui and Kuleshov, Volodymyr and De Sa, Christopher M},
  booktitle={NeurIPS},
  pages={4396--4429},
  year={2023}
}

@inproceedings{ashkboos2024quarot,
  title={Quarot: Outlier-free 4-bit inference in rotated llms},
  author={Ashkboos, Saleh and Mohtashami, Amirkeivan and Croci, Maximilian L and Li, Bo and Cameron, Pashmina and Jaggi, Martin and Alistarh, Dan and Hoefler, Torsten and Hensman, James},
  booktitle={NeurIPS},
  pages={100213--100240},
  year={2024}
}

@article{sun2024flatquant,
  title={Flatquant: Flatness matters for llm quantization},
  author={Sun, Yuxuan and Liu, Ruikang and Bai, Haoli and Bao, Han and Zhao, Kang and Li, Yuening and Hu, Jiaxin and Yu, Xianzhi and Hou, Lu and Yuan, Chun and others},
  journal={ICML},
  year={2025}
}

@inproceedings{liu2024spinquant,
  title={Spinquant: Llm quantization with learned rotations},
  author={Liu, Zechun and Zhao, Changsheng and Fedorov, Igor and Soran, Bilge and Choudhary, Dhruv and Krishnamoorthi, Raghuraman and Chandra, Vikas and Tian, Yuandong and Blankevoort, Tijmen},
  booktitle={ICLR},
  year={2024}
}

@article{nie2025large,
  title={Large language diffusion models},
  author={Nie, Shen and Zhu, Fengqi and You, Zebin and Zhang, Xiaolu and Ou, Jingyang and Hu, Jun and Zhou, Jun and Lin, Yankai and Wen, Ji-Rong and Li, Chongxuan},
  journal={arXiv preprint arXiv:2502.09992},
  year={2025}
}

@article{ye2025dream,
  title={Dream 7b: Diffusion large language models},
  author={Ye, Jiacheng and Xie, Zhihui and Zheng, Lin and Gao, Jiahui and Wu, Zirui and Jiang, Xin and Li, Zhenguo and Kong, Lingpeng},
  journal={arXiv preprint arXiv:2508.15487},
  year={2025}
}

@article{bai2025qwen2,
  title={Qwen2. 5-vl technical report},
  author={Bai, Shuai and Chen, Keqin and Liu, Xuejing and Wang, Jialin and Ge, Wenbin and Song, Sibo and Dang, Kai and Wang, Peng and Wang, Shijie and Tang, Jun and others},
  journal={arXiv preprint arXiv:2502.13923},
  year={2025}
}

@inproceedings{lin2025qserve,
  title={Qserve: W4a8kv4 quantization and system co-design for efficient llm serving},
  author={Lin, Yujun and Tang, Haotian and Yang, Shang and Zhang, Zhekai and Xiao, Guangxuan and Gan, Chuang and Han, Song},
  booktitle={MLSys},
  year={2025}
}

@article{guo2025seed1,
  title={Seed1. 5-vl technical report},
  author={Guo, Dong and Wu, Faming and Zhu, Feida and Leng, Fuxing and Shi, Guang and Chen, Haobin and Fan, Haoqi and Wang, Jian and Jiang, Jianyu and Wang, Jiawei and others},
  journal={arXiv preprint arXiv:2505.07062},
  year={2025}
}

@article{xiao2026mimo,
  title={MiMo-V2-Flash Technical Report},
  author={Xiao, Bangjun and Xia, Bingquan and Yang, Bo and Gao, Bofei and Shen, Bowen and Zhang, Chen and He, Chenhong and Lou, Chiheng and Luo, Fuli and Wang, Gang and others},
  journal={arXiv preprint arXiv:2601.02780},
  year={2026}
}

@article{tian2025next,
  title={From next-token to next-block: A principled adaptation path for diffusion llms},
  author={Tian, Yuchuan and Liang, Yuchen and Sun, Jiacheng and Zhang, Shuo and Yang, Guangwen and Shu, Yingte and Fang, Sibo and Guo, Tianyu and Han, Kai and Xu, Chao and others},
  journal={arXiv preprint arXiv:2512.06776},
  year={2025}
}

@article{jin2025role,
  title={On the Role of Discreteness in Diffusion LLMs},
  author={Jin, Ziqi and Wang, Bin and Lin, Xiang and Bing, Lidong and Sun, Aixin},
  journal={arXiv preprint arXiv:2512.22630},
  year={2025}
}

@article{wu2025fast,
  title={Fast-dllm v2: Efficient block-diffusion llm},
  author={Wu, Chengyue and Zhang, Hao and Xue, Shuchen and Diao, Shizhe and Fu, Yonggan and Liu, Zhijian and Molchanov, Pavlo and Luo, Ping and Han, Song and Xie, Enze},
  journal={arXiv preprint arXiv:2509.26328},
  year={2025}
}

@article{zhang2025mimo,
  title={MiMo-Audio: Audio Language Models are Few-Shot Learners},
  author={Zhang, Dong and Wang, Gang and Xue, Jinlong and Fang, Kai and Zhao, Liang and Ma, Rui and Ren, Shuhuai and Liu, Shuo and Guo, Tao and Zhuang, Weiji and others},
  journal={arXiv preprint arXiv:2512.23808},
  year={2025}
}

@article{team2023gemini,
  title={Gemini: a family of highly capable multimodal models},
  author={Team, Gemini and Anil, Rohan and Borgeaud, Sebastian and Alayrac, Jean-Baptiste and Yu, Jiahui and Soricut, Radu and Schalkwyk, Johan and Dai, Andrew M and Hauth, Anja and Millican, Katie and others},
  journal={arXiv preprint arXiv:2312.11805},
  year={2023}
}

@article{luo2025next,
  title={Next-omni: Towards any-to-any omnimodal foundation models with discrete flow matching},
  author={Luo, Run and Xia, Xiaobo and Wang, Lu and Chen, Longze and Shan, Renke and Luo, Jing and Yang, Min and Chua, Tat-Seng},
  journal={arXiv preprint arXiv:2510.13721},
  year={2025}
}

@article{cobbe2021training,
  title={Training verifiers to solve math word problems},
  author={Cobbe, Karl and Kosaraju, Vineet and Bavarian, Mohammad and Chen, Mark and Jun, Heewoo and Kaiser, Lukasz and Plappert, Matthias and Tworek, Jerry and Hilton, Jacob and Nakano, Reiichiro and others},
  journal={arXiv preprint arXiv:2110.14168},
  year={2021}
}

@article{chang2024survey,
  title={A survey on evaluation of large language models},
  author={Chang, Yupeng and Wang, Xu and Wang, Jindong and Wu, Yuan and Yang, Linyi and Zhu, Kaijie and Chen, Hao and Yi, Xiaoyuan and Wang, Cunxiang and Wang, Yidong and others},
  journal={ACM Transactions on Intelligent Systems and Technology},
  volume={15},
  number={3},
  pages={1--45},
  year={2024}
}

@article{kaddour2023challenges,
  title={Challenges and applications of large language models},
  author={Kaddour, Jean and Harris, Joshua and Mozes, Maximilian and Bradley, Herbie and Raileanu, Roberta and McHardy, Robert},
  journal={arXiv preprint arXiv:2307.10169},
  year={2023}
}

@article{zhao2023survey,
  title={A survey of large language models},
  author={Zhao, Wayne Xin and Zhou, Kun and Li, Junyi and Tang, Tianyi and Wang, Xiaolei and Hou, Yupeng and Min, Yingqian and Zhang, Beichen and Zhang, Junjie and Dong, Zican and others},
  journal={arXiv preprint arXiv:2303.18223},
  year={2023}
}

@article{thirunavukarasu2023large,
  title={Large language models in medicine},
  author={Thirunavukarasu, Arun James and Ting, Darren Shu Jeng and Elangovan, Kabilan and Gutierrez, Laura and Tan, Ting Fang and Ting, Daniel Shu Wei},
  journal={Nature Medicine},
  volume={29},
  number={8},
  pages={1930--1940},
  year={2023}
}

@article{chen2021evaluating,
  title={Evaluating large language models trained on code},
  author={Chen, Mark},
  journal={arXiv preprint arXiv:2107.03374},
  year={2021}
}

@inproceedings{liu2024mmbench,
  title={Mmbench: Is your multi-modal model an all-around player?},
  author={Liu, Yuan and Duan, Haodong and Zhang, Yuanhan and Li, Bo and Zhang, Songyang and Zhao, Wangbo and Yuan, Yike and Wang, Jiaqi and He, Conghui and Liu, Ziwei and others},
  booktitle={ECCV},
  pages={216--233},
  year={2024}
}

@article{hendrycks2021measuring,
  title={Measuring mathematical problem solving with the math dataset},
  author={Hendrycks, Dan and Burns, Collin and Kadavath, Saurav and Arora, Akul and Basart, Steven and Tang, Eric and Song, Dawn and Steinhardt, Jacob},
  journal={arXiv preprint arXiv:2103.03874},
  year={2021}
}

@inproceedings{yue2024mmmu,
  title={Mmmu: A massive multi-discipline multimodal understanding and reasoning benchmark for expert agi},
  author={Yue, Xiang and Ni, Yuansheng and Zhang, Kai and Zheng, Tianyu and Liu, Ruoqi and Zhang, Ge and Stevens, Samuel and Jiang, Dongfu and Ren, Weiming and Sun, Yuxuan and others},
  booktitle={CVPR},
  pages={9556--9567},
  year={2024}
}

@inproceedings{lin2014microsoft,
  title={Microsoft coco: Common objects in context},
  author={Lin, Tsung-Yi and Maire, Michael and Belongie, Serge and Hays, James and Perona, Pietro and Ramanan, Deva and Doll{\'a}r, Piotr and Zitnick, C Lawrence},
  booktitle={ECCV},
  pages={740--755},
  year={2014}
}

@article{merity2016pointer,
  title={Pointer sentinel mixture models},
  author={Merity, Stephen and Xiong, Caiming and Bradbury, James and Socher, Richard},
  journal={arXiv preprint arXiv:1609.07843},
  year={2016}
}

@article{yao2023comprehensive,
  title={A comprehensive study on post-training quantization for large language models},
  author={Yao, Zhewei and Li, Cheng and Wu, Xiaoxia and Youn, Stephen and He, Yuxiong},
  journal={arXiv preprint arXiv:2303.08302},
  year={2023}
}

@article{zhu2023minigpt,
  title={Minigpt-4: Enhancing vision-language understanding with advanced large language models},
  author={Zhu, Deyao and Chen, Jun and Shen, Xiaoqian and Li, Xiang and Elhoseiny, Mohamed},
  journal={arXiv preprint arXiv:2304.10592},
  year={2023}
}

@article{yao2024minicpm,
  title={Minicpm-v: A gpt-4v level mllm on your phone},
  author={Yao, Yuan and Yu, Tianyu and Zhang, Ao and Wang, Chongyi and Cui, Junbo and Zhu, Hongji and Cai, Tianchi and Li, Haoyu and Zhao, Weilin and He, Zhihui and others},
  journal={arXiv preprint arXiv:2408.01800},
  year={2024}
}

@inproceedings{wu2024next,
  title={Next-gpt: Any-to-any multimodal llm},
  author={Wu, Shengqiong and Fei, Hao and Qu, Leigang and Ji, Wei and Chua, Tat-Seng},
  booktitle={ICML},
  year={2024}
}

@article{bondarenko2024low,
  title={Low-rank quantization-aware training for llms},
  author={Bondarenko, Yelysei and Del Chiaro, Riccardo and Nagel, Markus},
  journal={arXiv preprint arXiv:2406.06385},
  year={2024}
}

@inproceedings{liu2024llm,
  title={Llm-qat: Data-free quantization aware training for large language models},
  author={Liu, Zechun and Oguz, Barlas and Zhao, Changsheng and Chang, Ernie and Stock, Pierre and Mehdad, Yashar and Shi, Yangyang and Krishnamoorthi, Raghuraman and Chandra, Vikas},
  booktitle={ACL Findings},
  pages={467--484},
  year={2024}
}

@inproceedings{chen2025efficientqat,
  title={Efficientqat: Efficient quantization-aware training for large language models},
  author={Chen, Mengzhao and Shao, Wenqi and Xu, Peng and Wang, Jiahao and Gao, Peng and Zhang, Kaipeng and Luo, Ping},
  booktitle={ACL},
  pages={10081--10100},
  year={2025}
}

@inproceedings{nagel2022overcoming,
  title={Overcoming oscillations in quantization-aware training},
  author={Nagel, Markus and Fournarakis, Marios and Bondarenko, Yelysei and Blankevoort, Tijmen},
  booktitle={ICML},
  pages={16318--16330},
  year={2022}
}

@article{liu2025paretoq,
  title={Paretoq: Scaling laws in extremely low-bit llm quantization},
  author={Liu, Zechun and Zhao, Changsheng and Huang, Hanxian and Chen, Sijia and Zhang, Jing and Zhao, Jiawei and Roy, Scott and Jin, Lisa and Xiong, Yunyang and Shi, Yangyang and others},
  journal={arXiv preprint arXiv:2502.02631},
  year={2025}
}

@article{wang2025diffusion,
  title={Diffusion llms can do faster-than-ar inference via discrete diffusion forcing},
  author={Wang, Xu and Xu, Chenkai and Jin, Yijie and Jin, Jiachun and Zhang, Hao and Deng, Zhijie},
  journal={arXiv preprint arXiv:2508.09192},
  year={2025}
}

@article{liu2024deepseek,
  title={Deepseek-v3 technical report},
  author={Liu, Aixin and Feng, Bei and Xue, Bing and Wang, Bingxuan and Wu, Bochao and Lu, Chengda and Zhao, Chenggang and Deng, Chengqi and Zhang, Chenyu and Ruan, Chong and others},
  journal={arXiv preprint arXiv:2412.19437},
  year={2024}
}

@inproceedings{xia2024efficient,
  title={Efficient multi-task llm quantization and serving for multiple lora adapters},
  author={Xia, Yifei and Fu, Fangcheng and Zhang, Wentao and Jiang, Jiawei and Cui, Bin},
  booktitle={NeurIPS},
  pages={63686--63714},
  year={2024}
}

@article{guo2025deepseek,
  title={Deepseek-r1: Incentivizing reasoning capability in llms via reinforcement learning},
  author={Guo, Daya and Yang, Dejian and Zhang, Haowei and Song, Junxiao and Zhang, Ruoyu and Xu, Runxin and Zhu, Qihao and Ma, Shirong and Wang, Peiyi and Bi, Xiao and others},
  journal={arXiv preprint arXiv:2501.12948},
  year={2025}
}

@article{dai2025flashdecoding++next,
  title={FlashDecoding++Next: High Throughput LLM Inference with Latency and Memory Optimization},
  author={Dai, Guohao and Hong, Ke and Mao, Qiuli and Li, Xiuhong and Xu, Jiaming and Huang, Haofeng and Xia, Hongtu and Ning, Xuefei and Yan, Shengen and Liang, Yun and others},
  journal={IEEE Transactions on Computers},
  year={2025}
}

@inproceedings{alizadeh2024llm,
  title={Llm in a flash: Efficient large language model inference with limited memory},
  author={Alizadeh, Keivan and Mirzadeh, Seyed Iman and Belenko, Dmitry and Khatamifard, S and Cho, Minsik and Del Mundo, Carlo C and Rastegari, Mohammad and Farajtabar, Mehrdad},
  booktitle={ACL},
  pages={12562--12584},
  year={2024}
}

@inproceedings{chen2024internvl,
  title={Internvl: Scaling up vision foundation models and aligning for generic visual-linguistic tasks},
  author={Chen, Zhe and Wu, Jiannan and Wang, Wenhai and Su, Weijie and Chen, Guo and Xing, Sen and Zhong, Muyan and Zhang, Qinglong and Zhu, Xizhou and Lu, Lewei and others},
  booktitle={CVPR},
  pages={24185--24198},
  year={2024}
}

@article{zhang2025quant,
  title={Quant-dLLM: Post-Training Extreme Low-Bit Quantization for Diffusion Large Language Models},
  author={Zhang, Tianao and Li, Zhiteng and Yan, Xianglong and Qin, Haotong and Guo, Yong and Zhang, Yulun},
  journal={arXiv preprint arXiv:2510.03274},
  year={2025}
}

@article{xu2025dllmquant,
  title={Dllmquant: Quantizing diffusion-based large language models},
  author={Xu, Chen and Yang, Dawei},
  journal={arXiv preprint arXiv:2508.14090},
  year={2025}
}

@inproceedings{yu2025mquant,
  title={Mquant: Unleashing the inference potential of multimodal large language models via static quantization},
  author={Yu, JiangYong and Zhou, Sifan and Yang, Dawei and Li, Shuoyu and Wang, Shuo and Hu, Xing and Xu, Chen and Xu, Zukang and Shu, Changyong and Yuan, Zhihang},
  booktitle={ACM MM},
  pages={1783--1792},
  year={2025}
}

@inproceedings{lin2024duquant,
  title={Duquant: Distributing outliers via dual transformation makes stronger quantized llms},
  author={Lin, Haokun and Xu, Haobo and Wu, Yichen and Cui, Jingzhi and Zhang, Yingtao and Mou, Linzhan and Song, Linqi and Sun, Zhenan and Wei, Ying},
  booktitle={NeurIPS},
  pages={87766--87800},
  year={2024}
}

@inproceedings{zhang2024magr,
  title={Magr: Weight magnitude reduction for enhancing post-training quantization},
  author={Zhang, Aozhong and Wang, Naigang and Deng, Yanxia and Li, Xin and Yang, Zi and Yin, Penghang},
  booktitle={NeurIPS},
  pages={85109--85130},
  year={2024}
}

@inproceedings{li2024svdquant,
  title={Svdquant: Absorbing outliers by low-rank components for 4-bit diffusion models},
  author={Li, Muyang and Lin, Yujun and Zhang, Zhekai and Cai, Tianle and Li, Xiuyu and Guo, Junxian and Xie, Enze and Meng, Chenlin and Zhu, Jun-Yan and Han, Song},
  booktitle={NeurIPS},
  year={2024}
}

@article{yang2025mmada,
  title={Mmada: Multimodal large diffusion language models},
  author={Yang, Ling and Tian, Ye and Li, Bowen and Zhang, Xinchen and Shen, Ke and Tong, Yunhai and Wang, Mengdi},
  journal={arXiv preprint arXiv:2505.15809},
  year={2025}
}

@article{xu2025qwen2,
  title={Qwen2. 5-omni technical report},
  author={Xu, Jin and Guo, Zhifang and He, Jinzheng and Hu, Hangrui and He, Ting and Bai, Shuai and Chen, Keqin and Wang, Jialin and Fan, Yang and Dang, Kai and others},
  journal={arXiv preprint arXiv:2503.20215},
  year={2025}
}

@inproceedings{xiao2023smoothquant,
  title={Smoothquant: Accurate and efficient post-training quantization for large language models},
  author={Xiao, Guangxuan and Lin, Ji and Seznec, Mickael and Wu, Hao and Demouth, Julien and Han, Song},
  booktitle={ICML},
  pages={38087--38099},
  year={2023}
}

@article{ma2024affinequant,
  title={Affinequant: Affine transformation quantization for large language models},
  author={Ma, Yuexiao and Li, Huixia and Zheng, Xiawu and Ling, Feng and Xiao, Xuefeng and Wang, Rui and Wen, Shilei and Chao, Fei and Ji, Rongrong},
  journal={arXiv preprint arXiv:2403.12544},
  year={2024}
}

@article{jeon2402l4q,
  title={L4Q: Parameter Efficient Quantization-Aware Fine-Tuning on Large Language Models. arXiv 2024},
  author={Jeon, H and Kim, Y and Kim, JJ},
  journal={arXiv preprint arXiv:2402.04902},
  year={2024}
}

@inproceedings{dettmers2022gpt3,
  title={Gpt3. int8 (): 8-bit matrix multiplication for transformers at scale},
  author={Dettmers, Tim and Lewis, Mike and Belkada, Younes and Zettlemoyer, Luke},
  booktitle={NeurIPS},
  pages={30318--30332},
  year={2022}
}

@article{wang2024longllava,
  title={Longllava: Scaling multi-modal llms to 1000 images efficiently via a hybrid architecture},
  author={Wang, Xidong and Song, Dingjie and Chen, Shunian and Zhang, Chen and Wang, Benyou},
  journal={arXiv preprint arXiv:2409.02889},
  year={2024}
}

@article{lyu2023macaw,
  title={Macaw-llm: Multi-modal language modeling with image, audio, video, and text integration},
  author={Lyu, Chenyang and Wu, Minghao and Wang, Longyue and Huang, Xinting and Liu, Bingshuai and Du, Zefeng and Shi, Shuming and Tu, Zhaopeng},
  journal={arXiv preprint arXiv:2306.09093},
  year={2023}
}

@article{sapkota2025multi,
  title={Multi-modal LLMs in agriculture: A comprehensive review},
  author={Sapkota, Ranjan and Qureshi, Rizwan and Hadi, Muhammad Usman and Hassan, Syed Zohaib and Sadak, Ferhat and Shoman, Maged and Sajjad, Muhammad and Dharejo, Fayaz Ali and Paudel, Achyut and Li, Jiajia and others},
  journal={IEEE Transactions on Automation Science and Engineering},
  year={2025}
}

@article{du2024bitdistiller,
  title={Bitdistiller: Unleashing the potential of sub-4-bit llms via self-distillation},
  author={Du, Dayou and Zhang, Yijia and Cao, Shijie and Guo, Jiaqi and Cao, Ting and Chu, Xiaowen and Xu, Ningyi},
  journal={arXiv preprint arXiv:2402.10631},
  year={2024}
}

@article{frantar2022gptq,
  title={Gptq: Accurate post-training quantization for generative pre-trained transformers},
  author={Frantar, Elias and Ashkboos, Saleh and Hoefler, Torsten and Alistarh, Dan},
  journal={arXiv preprint arXiv:2210.17323},
  year={2022}
}

@article{shao2023omniquant,
  title={Omniquant: Omnidirectionally calibrated quantization for large language models},
  author={Shao, Wenqi and Chen, Mengzhao and Zhang, Zhaoyang and Xu, Peng and Zhao, Lirui and Li, Zhiqian and Zhang, Kaipeng and Gao, Peng and Qiao, Yu and Luo, Ping},
  journal={arXiv preprint arXiv:2308.13137},
  year={2023}
}

@inproceedings{bhalgat2020lsq+,
  title={Lsq+: Improving low-bit quantization through learnable offsets and better initialization},
  author={Bhalgat, Yash and Lee, Jinwon and Nagel, Markus and Blankevoort, Tijmen and Kwak, Nojun},
  booktitle={CVPR},
  pages={696--697},
  year={2020}
}

@article{hu2025ostquant,
  title={Ostquant: Refining large language model quantization with orthogonal and scaling transformations for better distribution fitting},
  author={Hu, Xing and Cheng, Yuan and Yang, Dawei and Xu, Zukang and Yuan, Zhihang and Yu, Jiangyong and Xu, Chen and Jiang, Zhe and Zhou, Sifan},
  journal={arXiv preprint arXiv:2501.13987},
  year={2025}
}

@article{lin2025quantization,
  title={Quantization meets dllms: A systematic study of post-training quantization for diffusion llms},
  author={Lin, Haokun and Xu, Haobo and Wu, Yichen and Guo, Ziyu and Zhang, Renrui and Lu, Zhichao and Wei, Ying and Zhang, Qingfu and Sun, Zhenan},
  journal={arXiv preprint arXiv:2508.14896},
  year={2025}
}

@article{nagel2106white,
  title={A white paper on neural network quantization},
  author={Nagel, Markus and Fournarakis, Marios and Amjad, Rana Ali and Bondarenko, Yelysei and Van Baalen, Mart and Blankevoort, Tijmen},
  journal={arXiv preprint arXiv:2106.08295},
  year={2021}
}

@article{liu2025quantization,
  title={Quantization hurts reasoning? an empirical study on quantized reasoning models},
  author={Liu, Ruikang and Sun, Yuxuan and Zhang, Manyi and Bai, Haoli and Yu, Xianzhi and Yu, Tiezheng and Yuan, Chun and Hou, Lu},
  journal={arXiv preprint arXiv:2504.04823},
  year={2025}
}

@inproceedings{penrose1955generalized,
  title={A generalized inverse for matrices},
  author={Penrose, Roger},
  booktitle={Mathematical Proceedings of the Cambridge Philosophical Society},
  pages={406--413},
  year={1955}
}

@article{liu2025mtp,
  title={L-MTP: Leap Multi-Token Prediction Beyond Adjacent Context for Large Language Models},
  author={Liu, Xiaohao and Xia, Xiaobo and Zhao, Weixiang and Zhang, Manyi and Yu, Xianzhi and Su, Xiu and Yang, Shuo and Ng, See-Kiong and Chua, Tat-Seng},
  journal={NeurIPS},
  year={2025}
}
